\documentclass[twocolumn]{autart}   

\usepackage{graphics} 
\usepackage{epsfig} 
\usepackage{amsmath} 
\usepackage{amssymb} 
\usepackage[normalem]{ulem}
\usepackage{url}
\usepackage{amsfonts}
\usepackage{mathtools}
\usepackage{comment}  
\usepackage{dsfont}
\usepackage{cite}
\usepackage{algorithm}
\usepackage{algpseudocode}
\usepackage{booktabs}
\usepackage{siunitx}
\usepackage[table]{xcolor}
\usepackage{stmaryrd}
\usepackage{graphicx}         
\newif\ifshowproofs
\showproofsfalse   
\showproofstrue  

\ifshowproofs
\else
  \excludecomment{pf}
\fi

\newcommand{\shadelightred}{\cellcolor{red!8}}
\newcommand{\shademedred}{\cellcolor{red!16}}
\newcommand{\shadered}{\cellcolor{red!28}}

\newcommand{\shadelightblue}{\cellcolor{blue!8}}
\newcommand{\shademedblue}{\cellcolor{blue!16}}
\newcommand{\shadeblue}{\cellcolor{blue!24}}
\makeatletter
\@ifundefined{labelindent}{}{}
\makeatother
\usepackage{enumitem}

\usepackage{tikz}
\usepackage{circuitikz}
\usepackage{caption}
\usepackage{subcaption}
\usetikzlibrary{positioning}
\usetikzlibrary{arrows}
\usepackage{pgfplots}
\pgfplotsset{compat=1.9}
\usetikzlibrary{datavisualization}
\usetikzlibrary{datavisualization.formats.functions}
\usetikzlibrary{circuits.ee.IEC,decorations.markings}
\usepgfplotslibrary{polar}
\usetikzlibrary{positioning,calc}
\usetikzlibrary{arrows}
\usepackage{pgfplots}
\pgfplotsset{compat=1.17}
\usetikzlibrary{datavisualization}
\usetikzlibrary{datavisualization.formats.functions}
\usepgfplotslibrary{fillbetween}
\usetikzlibrary{arrows.meta}
\usetikzlibrary{arrows,positioning,decorations.pathreplacing}
\usetikzlibrary{fit}

\begin{document}

\begin{frontmatter}

\title{Convex training of Lipschitz-regularized shallow neural networks\thanksref{footnoteinfo}} 

\thanks[footnoteinfo]{This work was supported by National Science and Engineering Research Council of Canada (NSERC) under grants RGPIN-2023-04235 and ALLRP-579869-22.}

\author[Paestum]{Chao Yin}\ead{chao.yin@polymtl.ca} \ and   
\author[Paestum]{Antoine Lesage-Landry}\ead{antoine.lesage-landry@polymtl.ca}               

\address[Paestum]{Polytechnique Montr\'eal, GERAD \& Mila, Montr\'eal, QC, Canada, H3T 1J4.}  

\begin{keyword}                           
Machine Learning, Convex Optimization, Neural Networks.               
\end{keyword}                             

\begin{abstract}
In this work, we introduce a training procedure for shallow neural networks that promotes robustness against adversarial attacks. We solve a non-convex Lipschitz-regularized training program by introducing a convex restriction that can be efficiently solved to global optimality. Our approach can be employed as a post-processing step by taking a pre-trained network as an initial solution to then solving the convex program whose optimal network is guaranteed to be no worse than the initial one. We illustrate the improvements of our training procedure with experiments using real world datasets for regression tasks under an adversarial setting. We show numerically that solving our proposed convex program yields networks with lower objective values on the Lipschitz-regularized program compared to existing methods. Additionally, we show that on certain datasets, networks obtained using our convex training program are both more accurate and robust with respect to adversarial attacks.\end{abstract}
\end{frontmatter}
\section{Introduction}
Neural networks (NNs) are versatile modelling tools due to their ability to approximate a wide class of non-linear functions.
{They have notably found application in various dynamical systems and control tasks such as automated driving \cite{control_reference_1} and demand response control \cite{control_reference_2}. }
Typically, training NNs is done using stochastic gradient descent (SGD)~\cite{sgd} which does not guarantee convergence to a global optimum and is sensitive to hyperparameter choice and requires heavy tuning~\cite{hyperparameter-sensitivity}.

To tackle these issues, shallow neural networks (SNNs), i.e., NNs consisting of a single hidden layer, are considered and~\cite{SCNN_Pilanci} proposes a polynomial-time convex training program.
Their method requires sampling ReLU activation patterns on the data set and imposing a cone constraint for each sampled pattern. The authors of \cite{mishkin2025fastconvexoptimizationtwolayer} propose a more scalable approach by solving  \cite{SCNN_Pilanci}'s convex program with relaxed constraints to leverage GPU-enabled first-order optimization methods. The quality of the solutions obtained from solving \cite{SCNN_Pilanci}'s convex programs are sensitive to the choice of the activation patterns and can perform much worse than solutions obtained with SGD. To mitigate this, activation patterns obtained by first training a SNN with SGD can be employed during resolution. However, \cite{mishkin2025fastconvexoptimizationtwolayer,SCNN_Pilanci}'s formulation yields an SNN with twice as many hidden-units as the number activation patterns. One can either sample half of the remaining activation patterns, which still offers no guarantee on the quality of the solution to the convex program, or, sample all patterns which guarantees that the initial SNN is a feasible solution to the convex program but would in turn use twice as many hidden-units.

In addition to training, another challenge for NNs is their vulnerability to \textit{adversarial attacks}~\cite{fgsm,szegedy2014intriguingpropertiesneuralnetworks} which are small perturbations to the input data to alter the prediction of the trained model during deployment. The type of adversary or source for these perturbations can either be random noises encountered in practical settings, e.g., measurement or telemetry errors, or worst-case perturbations. These perturbations can modify the predicted class for classification, or deviate the predicted value from the target for regression tasks. A model is said to be \textit{adversarially robust} if it is resistant to adversarial attacks, in the sense that a higher effort is needed to modify the data input in order to alter the model prediction. 

Adversarial robustness is essential for ML models to be deployed in safety critical applications, such as autonomous driving~\cite{auto-driving} or security assessment of power systems~\cite{power-systems-verification}. A model's adversarial robustness can be evaluated via {certification} methods \cite{SoK} which provides different types of guarantees of adversarial robustness. The Lipschitz constant of an NN is an example of a certificate that can be interpreted as a sufficient condition for the absence of an adversarial example. A large Lipschitz constant indicates that the NN can be sensitive to small perturbations on certain inputs. Computing the Lipschitz constant for NNs is, however, NP-hard \cite{NP_hard}. To circumvent this issue, \cite{fazylab} efficiently estimates the Lipschitz constant of NNs by solving a semi-definite program (SDP). Several improvements on their work have since been proposed to make their SDP-based method more scalable \cite{automatica_chordal, automatica_CNN_lipschitz, automatica_pose}. Reference \cite{Lipschitz_Pauli} later shows that \cite{fazylab}'s SDP can be used to formulate a Lipschitz-regularized NN training program, then solved using the alternating direction method of multipliers (ADMM)\cite{admm}. Their approach differs from the standard training program in that it promotes adversarial robustness of the resulting model by minimizing the robustness certificate, which in this case is the Lipschitz constant, instead of the standard $\ell_2$- or $\ell_1$- penalty. However, their ADMM-based algorithm suffers from lack of convergence guarantees due to the training problem being non-convex. Moreover, their approach introduces extra hyperparameters beyond those used for SGD as part of the ADMM-based resolution. Alternatively, adversarial training~\cite{fgsm,PGD} is another adversarially robust training approach. It involves solving min-max optimization problems which minimizes the \textit{adversarial loss} induced by the worst-case perturbation to the training input. During training, worst-case perturbations are approximated using projected gradient descent attacks (PGD-attacks) \cite{PGD} and the NN weights are then updated to minimize the loss on the perturbed input. PGD-attacks are also a popular tool to evaluate the adversarial robustness of NNs post training. However, \cite{PGD} only solves the min-max problem approximately and without convergence guarantees. To address this issue, \cite{bai} proposes a convex surrogate program inspired by \cite{SCNN_Pilanci} for the min-max problem of SNNs which can be efficiently solved to global optimality. 
Similarly motivated by the aforementioned challenges of the ADMM approach to solve the Lipschitz-regularized training problem \cite{Lipschitz_Pauli}, our work provides a tractable convex program to train an adversarially robust SNN by minimizing the SNN's sensitivity to norm-bounded perturbations, namely, its Lipschitz constant. Our convex program can be viewed as a convex restriction which can be constructed from any solution and obtain the optimal SNN with the same activation patterns. This, in turn, addresses the challenge of finding good quality activation patterns as solving our convex restriction is guaranteed to yield a solution that is no worse than any initial solution. In contrast, the convex programs of existing approaches \cite{bai, mishkin2025fastconvexoptimizationtwolayer,SCNN_Pilanci} offer no guarantee that solving their program is advantageous compared to SGD. 
Our proposed program can be solved using off-the-shelf interior-point-based solvers like MOSEK~\cite{mosek-pythonapi}. 

The main contributions of this work are as follows:
\begin{enumerate}

{
\item We develop a convex training method for Lipschitz-regularized SNNs with ReLU activations. We show that fixing the outer layer weights and hidden-units activation patterns yields a convex restriction of the original non-convex training problem. Building on this result, we propose an iterative algorithm which solves a sequence of these convex restrictions to global optimality and prove that the objective value decreases monotonically. 
\item We propose techniques for constructing stronger convex restrictions. We show that a convex restriction can be initialized by extracting the activation patterns and outer layer weights of any pre-trained SNN, and demonstrate that solving this convex restriction to global optimality is guaranteed to yield an SNN with an objective value that is no worse than the pre-trained SNN. 
\item We experimentally illustrate on UCI and synthetic datasets that using our post-processing method for Lipschitz regularization yields more robust SNNs with respect to PGD \cite{PGD} adversarial attacks. We show on both $\ell_2$- and Lipschitz-regularized training problems, our post-processing method guarantees improvement on the objective value. 
}
\end{enumerate}
\section{Background}
In this section, we provide the background concepts needed to introduce our method. We then present intermediary results and analysis in preparation to our main contributions. 
\subsection{Lipschitz constant estimation for NNs}
Consider an SNN of $m\in\mathbb{N}$ hidden units defined by the function $f:\mathbb{R}^d\mapsto\mathbb{R}$ with ReLU activation functions and a scalar output. Let the collection of vectors $\{\mathbf{u}_j\}_{j=1}^m\in\mathbb{R}^d$ be the hidden units weights, and the collection of scalars $\{\alpha_j\}_{j=1}^m\in\mathbb{R}$ be the outer layer weights. The SNN is given by
\begin{equation}
\label{eq:NN_function}
    f(\mathbf{x}) = \sum_{j=1}^m\alpha_j(\mathbf{x}^\top\mathbf{u}_j)_+,
\end{equation} where $(\cdot)_+=\max(0,\cdot)$ is the ReLU activation function and $\mathbf{x}\in\mathbb{R}^d$ is the input feature vector. A bias term can be incorporated by appending a feature of $1$ to the input feature vector. Hereinafter, we assume that such a value is appended to $\mathbf{x}$ and that for a hidden unit $\mathbf{u}_j = \begin{bmatrix}u_1,...,u_{d+1}\end{bmatrix}^\top$, where $u_{d+1}$ is the bias term. 
We write $\mathbf{\hat u}_j=\begin{bmatrix}u_1,...,u_{d}\end{bmatrix}^\top$ when omitting the bias term. Let $\mathbf{r}\in\mathbb{R}_{>0}^m$ be a rescaling factor. We observe that the weights of an SNN can be rescaled with $\mathbf{R}:=\text{diag}(\mathbf{r})$ to obtain $\mathbf{\alpha'}=\mathbf{R}\mathbf{\alpha}$ and $\mathbf{u'}=\mathbf{R}^{-1}\mathbf{u}$ without affecting its output: 
\[
\sum_{j=1}^m\alpha_j(\mathbf{x}^\top\mathbf{u}_j)_+ =  \sum_{j=1}^mr_j\alpha_j(\frac{1}{r_j}\mathbf{x}^\top\mathbf{u}_j)_+= \sum_{j=1}^m\alpha_j'(\mathbf{x}^\top\mathbf{u}_j')_+.
\] This property will be in the next section's analysis.

To certify a model's adversarial robustness, we use a {Lipschitz upper bound} $L \in \mathbb{R}_{\geq0}$ of the SNN, defined as
\[
|f(\mathbf{x})-f(\mathbf{y})|\leq L{\|\mathbf{x}-\mathbf{y}\|_2}, \quad \forall \mathbf{x},\mathbf{y}\in\mathbb{R}^d.
\]
The lowest or tightest Lipschitz upper bound $L$ is referred to as the Lipschitz constant. In the context of adversarial robustness, the Lipschitz upper bound serves as certificate on the robustness of the regression model. We can interpret $L$ as an upper bound on the variation of the model output resulting from a unit variation in the input. Because computing the Lipschitz constant of a neural network is NP-hard \cite{NP_hard}, it is often preferable to efficiently estimate a tight upper bound. For NNs of arbitrary depth, \cite{fazylab} propose a tight Lipschitz constant estimation method based on semi-definite programming. For SNNs in particular, the constant estimation boils down to solving (\ref{eq:fazlyab_LMI}) given below. Specifically, \cite{fazylab} shows that if a $\rho \in \mathbb{R}_{\geq0}$ satisfies the linear matrix inequality (LMI)~(\ref{constraint:LMI}) for some auxiliary variable $\boldsymbol{\lambda}\in\mathbb{R}_{\geq0}^m$, then $\sqrt{\rho}$ is a Lipschitz upper bound for the SNN with weights $\{\mathbf{u}_j\}_{j=1}^m$ and $\{\alpha_j\}_{j=1}^m$. The objective is then to minimize $\rho$ to identify the tightest Lipschitz upper bound satisfying the LMI~\eqref{constraint:LMI}. Let $\mathbf{T} = \mathrm{diag}(\boldsymbol{\lambda})$, and $
\mathbf{U}^\top = \begin{bmatrix}\mathbf{\hat u}_1,...,\mathbf{\hat u}_m\end{bmatrix}$, consider the following Lipschitz upper bound estimation program for SNNs: 
\begin{subequations}
\begin{align}
\min_{\substack{\rho\in\mathbb{R}_{\geq0},\  \boldsymbol \lambda \in \mathbb{R}_{\geq0}^m}}  & \  \rho\\
\text{s.t.} \quad\;\;\; & {H}(\mathbf{T},\rho,\{\mathbf{u}_j\}_{j=1}^m,\{\alpha\}_{j=1}^m)
\preceq 0 \label{constraint:LMI},
\end{align}
\label{eq:fazlyab_LMI}
\end{subequations}
where,
\begin{equation}
\label{matrix H}
{H}(\mathbf{T},\rho,\{\mathbf{u}_j\}_{j=1}^m,\{\alpha_j\}_{j=1}^m)=
\begin{bmatrix}
-\rho \mathbf{I}_d & \mathbf{U}^\top \mathbf{T} & 0\\
\mathbf{T}\mathbf{U} & -2\mathbf{T} & \boldsymbol{\alpha}\\
0 & \boldsymbol{\alpha}^\top & -1
\end{bmatrix}.
\end{equation}
Problem (\ref{eq:fazlyab_LMI}) can be used post training to certify the adversarial robustness of a SNN where the model weights $\{\mathbf{u}_j\}_{j=1}^m\text{ and }\{\alpha_j\}_{j=1}^m$ are fixed and $\rho\in \mathbb{R}_{\geq0},\ \boldsymbol{\lambda}\in \mathbb{R}_{\geq0}^m$ are variables. The authors of \cite{fazylab} note that satisfying the LMI~(\ref{constraint:LMI}) is only a sufficient condition for $\rho$ to be a Lipschitz upper bound, therefore solving \eqref{eq:fazlyab_LMI} does not generally yield the Lipschitz constant. Reference \cite{Lipschitz_Pauli} proposes to use \cite{fazylab}'s method to regularize a NN’s Lipschitz
constant during training.
For SNNs in particular, this is done by allowing SNN weights $\{\mathbf{u}_j\}_{j=1}^m\text{ and }\{\alpha_j\}_{j=1}^m$ to be variables and adding the prediction loss to \eqref{eq:fazlyab_LMI}.
Given a data matrix $\mathbf{X}\in\mathbb{R}^{n\times d+1}$, consider the following non-convex training program for a SNN incorporating both $\ell_2$-regularization and Lipschitz-regularization:
\begin{subequations}\label{eq:Lip-NCNN}
\begin{align}
\min_{\substack{\rho\in \mathbb{R}_{\geq 0},\\\{\mathbf{u}_j\}_{j=1}^m\in \mathbb{R}^d\\\{\alpha_j\}_{j=1}^m\in\mathbb{R},\\ \boldsymbol\lambda\in \mathbb{R}_{\geq0}^{m}}}
& \ \frac{1}{2}\Big \| \sum_{j=1}^m \alpha_j(\mathbf{X}\mathbf{u}_j)_+ - \mathbf{y}\Big\|_2^2\label{eq:Lip-NCNN-obj}\\[-20pt]
  & \ \;\; + \frac{1}{2}\beta_1\sum_{j=1}^m\left(\|\mathbf{u}_j\|_2^2 + |\alpha_j|^2\right)
      + \beta_2\,\rho \nonumber\\
\text{s.t.}\quad \;\;
& H\big(\mathbf{T},\{\mathbf{u}_j\}_{j=1}^m,\{\alpha_j\}_{j=1}^m,\rho\big)\preceq 0, \label{eq:Lip-NCNN-con} \end{align} \end{subequations} 
where $\beta_1\geq0$ and $\beta_2\geq0$ are, respectively, the $\ell_2$- and Lipschitz-regularizer scaling factors.
{
For the remainder of this paper, we refer to \eqref{eq:Lip-NCNN} as one of two cases: (i) $\beta_1>0$ and $\beta_2=0$ for the $\ell_2$-regularized training problem; (ii) $\beta_1=0$ and $\beta_2>0$ for the Lipschitz-regularized problem.
}

Note that when we refer to the Lipschitz-regularized~\eqref{eq:Lip-NCNN}, it is assumed that the variables $\mathbf{T}$ are fixed to some feasible values unless specified otherwise. When matrix~\eqref{matrix H} is defined with a fixed $\mathbf{T}$, we write $H_{\mathbf{T}}$ to emphasize that $\mathbf{T}$ is considered as a parameter. To avoid the bilinearity now present in the LMI~\eqref{constraint:LMI} due to SNN weights being variables, \cite{Lipschitz_Pauli} propose to fix the variables $\boldsymbol{\lambda}
$ in $\mathbf{T}$. They obtain an initial feasible $\mathbf{T}$ by first training the NN with $\ell_2$-regularization using SGD, then, solving \eqref{eq:fazlyab_LMI} using the weights of the trained NN to obtain a feasible initial $\mathbf{T}$ to begin solving the Lipschitz-regularized~\eqref{eq:Lip-NCNN}.  
ADMM is employed to exploit the fact that (\ref{eq:Lip-NCNN-obj}) can be split into a non-convex and a convex part. 
We refer the reader to \cite{Lipschitz_Pauli} for detailed implementation as we later use their ADMM procedure without modification as a baseline in Section~\ref{sec:experiments}. It is worth highlighting that the work of~\cite{Lipschitz_Pauli} can be applied to NNs of any number of hidden layers as well as vector outputs.
In \cite{Lipschitz_Pauli}, the prediction loss, e.g., mean squared error (MSE) for regressions, remains non-convex in the variables $\{\mathbf{u}_j\}_{j=1}^m$ and $\{\alpha_j\}_{j=1}^m$. Consequently, the ADMM algorithm is not guaranteed to converge. In our work, we propose a convex restriction of the Lipschitz-regularized~\eqref{eq:Lip-NCNN} which can be efficiently solved to optimality. Moreover, when given an initial solution to \eqref{eq:Lip-NCNN}, the solution to the convex restriction is guaranteed to be no worse. We later demonstrate experimentally that solving this convex restriction often obtains solutions of significantly higher quality when used on a pre-trained SNN employing the ADMM based method.
\subsection{Convex Reformulation for Training SNNs with ReLU} \label{subsec:my-subsection}\label{sec:pilanci backgorund}
Reference \cite{mishkin2025fastconvexoptimizationtwolayer} shows that training ReLU SNNs with $\ell_2$-regularized~\eqref{eq:Lip-NCNN} can be reformulated as a convex program. We now introduce the tools required to derive this convex formulation, in addition to our restriction of the Lipschitz-regularized training program. We refer to an activation pattern of a hidden unit on data $\mathbf{X}$ as the vector $\mathbf{s} \in \{0,1\}^n$, which indicates the activation of the ReLU function on each data point. Formally, $\mathbf{s}$ is an activation pattern if there exists some $\mathbf{u}\in\mathbb{R}^{d+1}$ that satisfies \begin{equation}\label{eq:basic pattern constraint}
(2\mathbf{D}(\mathbf{s})-\mathbf{I})\,\mathbf{X}\mathbf{u}\ge \mathbf{0},
\end{equation}
where $\mathbf{D}(\mathbf{s}):=\text{diag}(\mathbf{s})$. If $\mathbf{u}$ is an optimization variable, the conic constraint in \eqref{eq:basic pattern constraint} forces $\mathbf{u}$ to have the activation pattern of $\mathbf{s}$. Let $M\in\mathbb{N}$ be the total number of unique activation patterns on the data $\mathbf{X}$. We write $\mathbf{s}^k$ to denote the activation pattern of index $k$ among the~$M$ unique ones. 
We define $\mathcal{K}=\{k_i\}_{i=1}^{\tilde M} $, with each element belonging to  $\{1,...,M\}\equiv\llbracket 1, M \rrbracket$, as a {multiset of activation patterns} which contains $\tilde M$ indices of activation patterns.
The {multiplicity} of an activation pattern is the number of times it appears in $\mathcal{K}$.
For two multisets $\mathcal{K}_i$ and $\mathcal{K}_j$, we write $\mathcal{K}_i \subseteq \mathcal{K}_j$ if every $k$ appears in $\mathcal{K}_j$ with multiplicity at least as large as in $\mathcal{K}_i$.
Unlike in \cite{mishkin2025fastconvexoptimizationtwolayer}, we admit activation patterns with a multiplicity higher than one. 

Let $\mathcal{K^+}$ and $\mathcal{K^-}$ be two pattern multisets of sizes $\tilde M^+$ and $\tilde M^-$, respectively. Consider the convex program for training SNNs: 
\begin{subequations}\label{eq:Lip-convex-NN-pilanci}
\begin{align}
&\min_{\substack{
\{\mathbf{w}_{i}^{+}\}_{i=1}^{\tilde M^+}\in\mathbb{R}^d\\
\{\mathbf{w}_{i}^{-}\}_{i=1}^{\tilde M^-}\in\mathbb{R}^d}}
\begin{aligned}[t]
   &\tfrac12 \Bigl\|
    \sum_{i=1}^{\tilde M^+}\mathbf{D}(\mathbf{s}^{k_i^+})\mathbf{X}\mathbf{w}_i^{+}
   \\[-0.9ex] &\quad-\sum_{i=1}^{\tilde M^-}\mathbf{D}(\mathbf{s}^{k_i^-})\mathbf{X}\mathbf{w}_i^{-}
   -\mathbf{y}\Bigr\|_2^2\\[-0.9ex]
   &\qquad + \beta_1\Bigl( \sum_{i=1}^{\tilde M^+}\|\mathbf{w}_i^{+}\|_2
        + \sum_{i=1}^{\tilde M^-}\|\mathbf{w}_i^{-}\|_2\Bigr)
\end{aligned} 
\label{eq:Lip-convex-NN-pilanci-obj}
\\[-2pt]
\text{s.t.} \quad &
 (2\mathbf{D}(\mathbf{s}^{k_i^+})-\mathbf{I})\mathbf{X}\mathbf{w}_i^+\ge 0,
\quad \forall i \in \llbracket 1,\, \tilde M^+ \rrbracket,
\label{eq:Lip-convex-NN-pilanci-conp}
\\
& (2\mathbf{D}(\mathbf{s}^{k_i^-})-\mathbf{I})\mathbf{X}\mathbf{w}_i^-\ge 0,
\quad \forall i \in \llbracket 1,\, \tilde M^- \rrbracket.
\label{eq:Lip-convex-NN-pilanci-conn}
\end{align}
\end{subequations}
The following result establishes the equivalence between~\eqref{eq:Lip-convex-NN-pilanci} and the $\ell_2$-regularized~\eqref{eq:Lip-NCNN} and extends \cite[ Theorem 2.1]{mishkin2025fastconvexoptimizationtwolayer}.
\begin{cor}
\label{equivalence convex l2}
 Consider the convex program \eqref{eq:Lip-convex-NN-pilanci} constructed using pattern multisets $\mathcal{K}^\pm$. Suppose there exist minimizers $\big(\{\mathbf{u}_j^{\star}\}^{m}_{j=1},\{\alpha^\star_j\}^{m}_{j=1}\big)$ and $\big(\{\mathbf{w}_i^{{\pm}\star}\}_{i=1}^{\tilde{M}^\pm}\big)$ of the $\ell_2$-regularized~\eqref{eq:Lip-NCNN} and \eqref{eq:Lip-convex-NN-pilanci}, respectively, which satisfy the following assumptions:
\begin{assum}
The number of hidden units $m$ is such that \label{thm1a1}
$
m \ge \tilde m = q^+ + q^-,
$
where $q^+$ and $q^-$ are the number of nonzero entries $\mathbf{w}_i^{+\star}$ and $\mathbf{w}_i^{-\star}$, respectively. Suppose that $\mathbf{w}_i^{+\star}$ and $\mathbf{w}_i^{-\star}$
are ordered so that the nonzero entries come first, followed 
the zero entries.
\end{assum}
\begin{assum}
The corresponding pattern multisets $\mathcal{L^+}$ and $\mathcal{L^-}$ of $\{\mathbf{u}_j^\star,\alpha_j^\star\}_{j=1}^m$ for \eqref{eq:Lip-NCNN}, which are the patterns of hidden units with positive outer weights and negative outer weights, respectively, are such that $\mathcal{L}^+\subseteq \mathcal{K}^+$ and $\mathcal{L}^-\subseteq \mathcal{K}^-$. \label{thm1a2}
\end{assum}
Then, the $\ell_2$-regularized~\eqref{eq:Lip-NCNN} and \eqref{eq:Lip-convex-NN-pilanci} have the same optimal objective value, and an optimal solution for the $\ell_2$-regularized~\eqref{eq:Lip-NCNN} can be constructed via 
\begin{subequations}\label{eq:construction}
\begin{align}
(\mathbf{u}_j^{\star}, \alpha_j^{\star})&\leftarrow (\tfrac{
\mathbf{w}_{j}^{{+}\star}}{\sqrt{\|\mathbf{w}_{j}^{{+}\star}\|_2}},\ 
\sqrt{\|\mathbf{w}_{j}^{{+}\star}\|_2}),  j\in\llbracket 1, \tilde{q}^+ \rrbracket \\
(\mathbf{u}_{j+\tilde q^+}^{\star}, \alpha_{j+\tilde q^+}^{\star})&\leftarrow (\tfrac{
\mathbf{w}_{j}^{{-}\star}}{\sqrt{\|\mathbf{w}_{j}^{{-}\star}\|_2}},-\ 
\sqrt{\|\mathbf{w}_{j}^{{-}\star}\|_2}),  j \in \llbracket 1,\, \tilde q^- \rrbracket. 
\end{align}
and $(\mathbf{u}_{j}^{\star}, \alpha_{j}^{\star})$ are assigned zero for $j\in\llbracket \tilde m+1,m \rrbracket$.
\end{subequations} 
\end{cor}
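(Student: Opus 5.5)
The plan is to show two inequalities between the optimal values, $p^\star_{\mathrm{NC}}$ of the $\ell_2$-regularized~\eqref{eq:Lip-NCNN} and $p^\star_{\mathrm{C}}$ of~\eqref{eq:Lip-convex-NN-pilanci}. Each direction comes from an explicit map between feasible points that does not increase the objective. The argument follows the template of \cite[Theorem 2.1]{mishkin2025fastconvexoptimizationtwolayer}. The only new ingredient is bookkeeping for multisets with repeated patterns, which is what Assumptions~\ref{thm1a1} and~\ref{thm1a2} control.

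\emph{Step 1 ($p^\star_{\mathrm{C}}\le p^\star_{\mathrm{NC}}$).} Take the optimal SNN $\{\mathbf{u}_j^\star,\alpha_j^\star\}$. Since $\beta_2=0$, the LMI constraint plays no role: for any weights one can choose $\boldsymbol\lambda$ and a large enough $\rho$ so that it holds.
\begin{enumerate}
\item Apply the rescaling with $r_j=\sqrt{\|\mathbf{u}_j^\star\|_2/|\alpha_j^\star|}$, discarding units with $\alpha_j^\star=0$ or $\mathbf{u}_j^\star=0$. This leaves the network output unchanged. By AM--GM, $\tfrac12(\|\mathbf{u}_j\|^2+\alpha_j^2)\ge|\alpha_j|\|\mathbf{u}_j\|_2$, with equality after balancing. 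Optimality of $\{\mathbf{u}_j^\star,\alpha_j^\star\}$ therefore forces the weights to be balanced already, so the regularizer equals $\beta_1\sum_j|\alpha_j^\star|\|\mathbf{u}_j^\star\|_2$.
\item For each unit with $\alpha_j^\star>0$, let $\mathbf{s}$ be its pattern and set $\mathbf{w}=\alpha_j^\star\mathbf{u}_j^\star$. Then $\alpha_j^\star(\mathbf{X}\mathbf{u}_j^\star)_+=\mathbf{D}(\mathbf{s})\mathbf{X}\mathbf{w}$, and the cone constraint~\eqref{eq:basic pattern constraint} holds for $\mathbf{w}$.
\item Units with $\alpha_j^\star<0$ are handled the same way with $\mathbf{w}=|\alpha_j^\star|\mathbf{u}_j^\star$ on the negative side.
\end{enumerate}
Because $\mathcal{L}^\pm\subseteq\mathcal{K}^\pm$ as multisets (Assumption~\ref{thm1a2}), every hidden unit can be assigned to a distinct slot $i$ of $\mathcal{K}^\pm$ carrying the same pattern. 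This is exactly where multiplicities matter: two units sharing a pattern need two copies of it. Unused slots are set to $\mathbf{w}_i^\pm=0$, which is feasible. The convex objective at this point equals the nonconvex objective, which gives the inequality.

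\emph{Step 2 ($p^\star_{\mathrm{NC}}\le p^\star_{\mathrm{C}}$).} Start from the convex minimizer and build the network via~\eqref{eq:construction}; Assumption~\ref{thm1a1} guarantees $m$ is large enough to hold all $\tilde m$ nonzero blocks, and the remaining units are zero.
\begin{enumerate}
\item The cone constraints give $\mathbf{D}(\mathbf{s}^{k_i})\mathbf{X}\mathbf{w}_i=(\mathbf{X}\mathbf{w}_i)_+$. Hence $\alpha_j(\mathbf{X}\mathbf{u}_j)_+=\pm(\mathbf{X}\mathbf{w}_i)_+$, and the predictions coincide.
\item With $\mathbf{u}_j=\mathbf{w}_i/\sqrt{\|\mathbf{w}_i\|_2}$ and $|\alpha_j|=\sqrt{\|\mathbf{w}_i\|_2}$, we get $\tfrac12(\|\mathbf{u}_j\|^2+\alpha_j^2)=\|\mathbf{w}_i\|_2$. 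So the regularizers coincide as well.
\item Choose $\boldsymbol\lambda$ and $\rho$ large enough to make~\eqref{eq:Lip-NCNN-con} hold, which is possible since $\beta_2=0$.
\end{enumerate}
The constructed network thus attains $p^\star_{\mathrm{C}}$ in~\eqref{eq:Lip-NCNN}. Combined with Step 1, the two optimal values are equal and the constructed network is optimal.

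\emph{Main obstacle.} I expect Step 1's injective assignment of hidden units to pattern slots to be the delicate point. It must respect multiplicity and the sign split, and it must handle ties where a unit has $\mathbf{X}\mathbf{u}=0$ on some data points. In that case the pattern is not unique, so I would fix the pattern as the one listed in $\mathcal{L}^\pm$; any choice consistent with~\eqref{eq:basic pattern constraint} gives the same ReLU output. A smaller point is feasibility of the LMI when $\beta_2=0$. I would verify it by a Schur complement argument: take $\boldsymbol\lambda$ large so that the lower $2\times2$ block is negative definite, then take $\rho$ large.
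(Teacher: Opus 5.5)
Your proposal is correct and follows essentially the same route as the paper. The paper defers to the two-sided argument of \cite[Theorem 2.1]{mishkin2025fastconvexoptimizationtwolayer} and mirrors it in the proof of Theorem~\ref{convex reformulation theorem}: one inequality comes from balanced rescaling plus a multiplicity-respecting slot assignment (the map $h(j)$ enabled by Assumption~\ref{thm1a2}), and the other from the construction~\eqref{eq:construction} under Assumption~\ref{thm1a1}, which is exactly your argument. Your explicit check that the LMI~\eqref{eq:Lip-NCNN-con} can always be satisfied when $\beta_2=0$ is a detail the paper leaves implicit, and you handle it correctly.
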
 Corollary~\ref{equivalence convex l2} shows that under certain conditions, we can solve the convex program \eqref{eq:Lip-convex-NN-pilanci} to optimality and recover an optimal solution for the $\ell_2$-regularized~\eqref{eq:Lip-NCNN} using the mapping~\eqref{eq:construction}. Corollary~\ref{equivalence convex l2} is proven in \cite{mishkin2025fastconvexoptimizationtwolayer} for the special case $\mathcal{K}^+=\mathcal{K}^-$, where $\mathcal{K}^\pm$ contain only unique patterns. To broaden the scope of Corollary~\ref{equivalence convex l2}, we removed the aforementioned requirements as the arguments of \cite{mishkin2025fastconvexoptimizationtwolayer} carries over verbatim \cite[ Theorem 2.1]{mishkin2025fastconvexoptimizationtwolayer}. We now pivot to a discussion on the practical issues of Corollary~\ref{equivalence convex l2} in order to motivate our proposed modifications to the definition of pattern multisets. 
\begin{rem}In the $\ell_2$-regularized case, removing the requirement that $\mathcal{K}^\pm$ must contain unique patterns bears limited impact, as \cite{mishkin2025fastconvexoptimizationtwolayer} have shown that any two hidden units with the same activation pattern and outer weight sign can be summed into a single weight without increasing the objective. This does not necessarily hold true for the Lipschitz-regularized case. We therefore define $\mathcal{K}^\pm$ as multisets which can include patterns with a non-unit multiplicity. \end{rem}

While Corollary~\ref{equivalence convex l2} provides an efficient way to compute an SNN weights, the convex program~\eqref{eq:Lip-convex-NN-pilanci} equivalent to the $\ell_2$-regularized~\eqref{eq:Lip-NCNN} cannot be constructed without the pattern multisets of an optimal solution to the latter problem due to \eqref{thm1a2}. This would require enumerating all activation patterns on $\mathbf{X}$, which is impractical because this number grows exponentially in the rank of the data \cite{SCNN_Pilanci}. Instead of solving~\eqref{eq:Lip-convex-NN-pilanci} with the optimal pattern multisets, \cite{mishkin2025fastconvexoptimizationtwolayer} and \cite{SCNN_Pilanci} propose that patterns can be randomly generated using a Gaussian distribution or extracted from a pre-trained SNN. For instance, \cite{mishkin2025fastconvexoptimizationtwolayer} first solves the non-convex $\ell_2$-regularized training problem $\eqref{eq:Lip-NCNN}$ of $m$ hidden units using SGD. Doing so, they obtain an initial feasible SNN with pattern multisets $\mathcal{L}^\pm$ to construct the convex program \eqref{eq:Lip-convex-NN-pilanci} by taking $\mathcal{K}^+=\mathcal{K}^-=\mathcal{L}^+\cup \mathcal{L}^-$. We show later that this convex program is in fact a convex restriction on the non-convex training problem with $2m$ hidden units. This, in turn, implies the convex restriction is on a different problem than the initial one because of the use of more hidden units. Because the convex restriction contains the initial solution, it is guaranteed that the convex optimal solution yields improvement, albeit for a different problem with twice as many hidden units. 
The first reason this is not desirable is that increasing the number of hidden units might lead to overfitting. The second reason is that it prevents a direct comparison between the SNN obtained from the convex program \eqref{eq:Lip-convex-NN-pilanci} with the initial SNN to determine if improvements are due to the globally optimal solution on the convex restriction and not due to simply increasing the expressivity of the SNN by adding the hidden units. Our proposed definition of pattern multisets allows us to construct a convex restrictions on the original training problem with the same number of hidden units. In the next section, we present our main contribution and propose a convex restriction for the Lipschitz-regularized~\eqref{eq:Lip-NCNN} inspired by \eqref{eq:Lip-convex-NN-pilanci}. 

\section{Lipschitz-regularized convex training}
{
In this section, we present a convex training method for the Lipschitz-regularized~\eqref{eq:Lip-NCNN}. As mentioned previously, our goal is not to solve the Lipschitz-regularized~\eqref{eq:Lip-NCNN} to global optimality. Instead, we aim to improve any given initial feasible solution via solving a convex restriction of Lipschitz-regularized~\eqref{eq:Lip-NCNN} to global optimality. By construction, the initial solution remains feasible for this restricted problem and, therefore, its optimal solution is guaranteed to be no worse than that of the initial solution. This process is formalized in Algorithm~\ref{alg:training_lipschitz}.

We next derive the first step in this convex restriction. Specifically, we fix the non-negative components of the outer weights which yields a non-convex restriction of~\eqref{eq:Lip-NCNN}. We then obtain a convex restriction by fixing the signs of the outer weights as well as the activation patterns of the hidden-units. 
\subsection{Warm-up: SNN with fixed outer layer absolute values}
We re-expresss the outer weights as $\boldsymbol{\alpha}=\text{sign}(\boldsymbol{\alpha})|\boldsymbol{\alpha}|$ element-wise and let $\mathbf{b}=\text{sign}(\boldsymbol{\alpha})$ and $\mathbf{c}=|\boldsymbol{\alpha}|$ be, respectively, the sign and non-negative components of the outer weights, with the convention that  $\operatorname{sign}(x)=1$ for $x\geq 0$ and $-1$ otherwise. Given fixed $\mathbf{c}=\{c_j\}_{j=1}^m>\mathbf{0}$, the following non-convex mixed-integer program trains an SNN with $m$ hidden units using the ReLU activation function:
\begin{subequations}\label{eq:Lip-NCNN-simplified}
\begin{align}
\min_{\substack{
    \rho \in \mathbb{R}_{\geq0},\\
    \{\mathbf{u}_j\}_{j=1}^m \in \mathbb{R}^d,\\
    \{{b}_j\}_{j=1}^m \in \{1,-1\},\\
}}
\;
& \frac{1}{2}\Bigl\|
    \sum_{j=1}^m b_jc_j (\mathbf{X}\mathbf{u}_j)_+ - \mathbf{y}
  \Bigr\|_2^2+ \beta_2\rho
\label{eq:obj_split_SAPN}\\[2pt]
\text{s.t.}\quad\;\;\;\;\;
& {H}_{\mathbf{T}}\!\left(\{\mathbf{u}_j\}_{j=1}^{m},
             \{b_jc_j\}_{j=1}^{m}, \rho\right)\preceq 0.
\label{constraints:LMI_SAPN}
\end{align}
\end{subequations}

We prove that~\eqref{eq:Lip-NCNN-simplified} given $\mathbf{c}$ is a restriction of the Lipschitz-regularized~\eqref{eq:Lip-NCNN} in Lemma~\ref{lemma:NC_restriction} when $\mathbf{T}$ in both problems is fixed to the same value. In the case where $\mathbf{T}$ is free, however, they have the same optimal objective values if an optimal solution exists in \eqref{eq:Lip-NCNN} such that all its outer layer weights are non-zero.
\begin{lem}
\label{lemma:NC_restriction}
Consider the Lipschitz-regularized~\eqref{eq:Lip-NCNN} and~\eqref{eq:Lip-NCNN-simplified} with $\mathbf{c}$ with optimal objective values $p_\eqref{eq:Lip-NCNN}^\star$ and $p^\star_\eqref{eq:Lip-NCNN-simplified}(\mathbf{c})$, respectively. Then, if (i) $\mathbf{T}$ is fixed to the same value in both problems, we have that $p_\eqref{eq:Lip-NCNN}^\star\leq p^\star_\eqref{eq:Lip-NCNN-simplified}(\mathbf{c})$; and (ii) $\mathbf{T}$ is free in both problems and there exists an optimal solution of \eqref{eq:Lip-NCNN} with $\{\alpha_j^\star\}_{j=1}^m$ such that $\alpha_j^\star>0,\ \forall j\in\llbracket 1,m\rrbracket$, we have $p_\eqref{eq:Lip-NCNN}^\star=p^\star_\eqref{eq:Lip-NCNN-simplified}(\mathbf{c})$.
\end{lem}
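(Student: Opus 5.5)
For part (i) I would show that the feasible set of \eqref{eq:Lip-NCNN-simplified} embeds into that of the Lipschitz-regularized \eqref{eq:Lip-NCNN}, with equal objective values. Fix $\mathbf{T}$ in both problems. Take any feasible $(\rho,\{\mathbf{u}_j\},\{b_j\})$ of \eqref{eq:Lip-NCNN-simplified} with the given $\mathbf{c}$, and set $\alpha_j:=b_jc_j$. Then $H_{\mathbf{T}}(\{\mathbf{u}_j\},\{\alpha_j\},\rho)\preceq 0$ is exactly constraint \eqref{constraints:LMI_SAPN}, so the point is feasible for \eqref{eq:Lip-NCNN}. Since $\beta_1=0$, both objectives equal $\tfrac12\|\sum_j\alpha_j(\mathbf{X}\mathbf{u}_j)_+-\mathbf{y}\|_2^2+\beta_2\rho$. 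Minimizing over the larger set gives $p^\star_{\eqref{eq:Lip-NCNN}}\le p^\star_{\eqref{eq:Lip-NCNN-simplified}}(\mathbf{c})$. The same embedding, now carrying $\mathbf{T}$ along as a variable, gives the same inequality when $\mathbf{T}$ is free; this is one half of (ii).

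For the reverse inequality in (ii), let $(\rho^\star,\mathbf{T}^\star,\{\mathbf{u}_j^\star\},\{\alpha_j^\star\})$ be an optimal solution of \eqref{eq:Lip-NCNN} with all $\alpha_j^\star\neq 0$; the hypothesis gives $\alpha_j^\star>0$. I would use the rescaling invariance from the background section with $r_j:=c_j/|\alpha_j^\star|>0$ and $\mathbf{R}=\mathrm{diag}(\mathbf{r})$. Define
\[
\alpha_j'=r_j\alpha_j^\star=\operatorname{sign}(\alpha_j^\star)\,c_j, \qquad \mathbf{u}_j'=\mathbf{u}_j^\star/r_j,
\]
including the bias entry. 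The network output on $\mathbf{X}$ is unchanged, so the loss term is unchanged. Keeping $\rho'=\rho^\star$ keeps the whole objective unchanged, because $\beta_1=0$.

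The key step, and the only real obstacle, is to show that the rescaled weights still satisfy the LMI for some admissible $\mathbf{T}'$. I would set $\mathbf{T}'=\mathbf{R}\mathbf{T}^\star\mathbf{R}$, which is diagonal and nonnegative, and use the congruence $\mathbf{S}=\mathrm{diag}(\mathbf{I}_d,\mathbf{R},1)$. Write $\mathbf{U}'=\mathbf{R}^{-1}\mathbf{U}^\star$, where only the non-bias rows enter, and use that diagonal matrices commute. Then $\mathbf{S}^\top H(\mathbf{T}^\star,\rho^\star,\{\mathbf{u}^\star_j\},\{\alpha^\star_j\})\mathbf{S}$ has the following blocks:
\begin{itemize}
\item off-diagonal block $\mathbf{U}^{\star\top}\mathbf{T}^\star\mathbf{R}=\mathbf{U}'^\top\mathbf{T}'$;
\item middle diagonal block $-2\mathbf{R}\mathbf{T}^\star\mathbf{R}=-2\mathbf{T}'$;
\item coupling column $\mathbf{R}\boldsymbol{\alpha}^\star=\boldsymbol{\alpha}'$;
\item $-\rho^\star\mathbf{I}_d$ and $-1$, unchanged.
\end{itemize}
Hence this product equals $H(\mathbf{T}',\rho^\star,\{\mathbf{u}_j'\},\{\alpha_j'\})$. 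Congruence preserves negative semidefiniteness, so this matrix is $\preceq 0$.

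Setting $b_j=\operatorname{sign}(\alpha_j^\star)$, the point $(\rho^\star,\mathbf{T}',\{\mathbf{u}_j'\},\{b_j\})$ is therefore feasible for \eqref{eq:Lip-NCNN-simplified} with the prescribed $\mathbf{c}$. Its objective value is $p^\star_{\eqref{eq:Lip-NCNN}}$, so $p^\star_{\eqref{eq:Lip-NCNN-simplified}}(\mathbf{c})\le p^\star_{\eqref{eq:Lip-NCNN}}$. Combined with the first paragraph, this gives equality.

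The argument needs every $\alpha_j^\star\neq0$ so that each $r_j$ is finite and positive, and it needs $\mathbf{T}$ free so that it can absorb the scaling. This explains why part (i), with fixed $\mathbf{T}$, yields only an inequality.
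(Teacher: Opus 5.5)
Your proposal is correct and follows essentially the same route as the paper. The embedding $\alpha_j=b_jc_j$ (the paper's bijection $\tilde\Phi$ with $\mathbf{R}=\mathbf{I}$) gives the inequality in (i), and the rescaling $\mathbf{R}=\mathrm{diag}(\{c_j/|\alpha_j^\star|\}_{j=1}^m)$ with $\mathbf{T}\mapsto\mathbf{R}\mathbf{T}\mathbf{R}$ (the paper's map $\Phi$) gives the reverse inequality in (ii). Your explicit congruence by $\mathrm{diag}(\mathbf{I}_d,\mathbf{R},1)$ supplies the LMI-feasibility check that the paper only asserts by saying the LMIs are identical.
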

\begin{pf}
Let $\mathcal{S}_\eqref{eq:Lip-NCNN-simplified}(\mathbf{c})$ be the feasible solution set of \eqref{eq:Lip-NCNN-simplified} using $\mathbf{c}>0$ and $\mathcal{S}_\eqref{eq:Lip-NCNN}$ be that of \eqref{eq:Lip-NCNN} such that for any solution in $\mathcal{S}_\eqref{eq:Lip-NCNN}$, its outer layer weights $\{\alpha_j\}_{j=1}^m$ are such that $\alpha_j>0,\ \forall j\in\llbracket 1,m\rrbracket$.
Consider the surjective mapping $\Phi:\mathcal{S}_\eqref{eq:Lip-NCNN}\mapsto\mathcal{S}_\eqref{eq:Lip-NCNN-simplified}(\mathbf{c})$ defined as 
\begin{subequations}
\label{mapping:4_and_8}
    \begin{align} \notag
        \Phi\big(\{\mathbf{u}_j\}_{j=1}^m,&\{\alpha_j\}_{j=1}^m,\rho,\mathbf{T}\big)=\\ \notag
        &\big(\{|\alpha_j|\mathbf{u}_j/c_j\}_{j=1}^m,\{\alpha_j/|\alpha_j|\}_{j=1}^m,\rho,\mathbf{RTR}\big),
    \end{align}
\end{subequations}
where $\mathbf{R}=\text{diag}(\{c_j/|\alpha_j|\}_{j=1}^m)$. The mapped solution is feasible for \eqref{eq:Lip-NCNN-simplified} with $\mathbf{c}$, because the LMIs in both problems are identical. Additionally, because rescaling does not change the SNN output, the objective value of the mapped solution remains the same. 
To show that $\Phi$ is surjective, consider any solution $\mathbf{s}_\eqref{eq:Lip-NCNN-simplified}=(\{\mathbf{u}_j\}_{j=1}^m,\{b_j\}_{j=1}^m,\rho,\mathbf{T}_2\big)\in \mathcal{S}_\eqref{eq:Lip-NCNN-simplified}(\mathbf{c})$. Consider$$\mathbf{s}_\eqref{eq:Lip-NCNN}=\big(\{\mathbf{u}_j\}_{j=1}^m,\{b_jc_j\}_{j=1}^m,\rho,\mathbf{T}_2\big)\in\mathcal{S}_\eqref{eq:Lip-NCNN}(\mathbf{c}),$$
where the feasible subset $\mathcal{S}_\eqref{eq:Lip-NCNN}(\mathbf{c})\subseteq\mathcal{S}_\eqref{eq:Lip-NCNN}$ is defined as 
\begin{subequations}
    \begin{align}
        \notag
        \mathcal{S}_\eqref{eq:Lip-NCNN}(\mathbf{c}) = \{\mathbf{s}\in \mathcal{S}_\eqref{eq:Lip-NCNN}
       \ |\ c_j=|\alpha_j|,\ \forall j \in \llbracket 1,m\rrbracket \}.
    \end{align}
\end{subequations}
Then, $\mathbf{R=I}$ and $\Phi(\mathbf{s}_\eqref{eq:Lip-NCNN})=\mathbf{s}_{\eqref{eq:Lip-NCNN-simplified}}$. Hence, $p_\eqref{eq:Lip-NCNN}^\star= p^\star_\eqref{eq:Lip-NCNN-simplified}(\mathbf{c})$ if there is at least one optimal solution for \eqref{eq:Lip-NCNN} in $\mathcal{S}_\eqref{eq:Lip-NCNN}$.
In the case where $\mathbf{T}$ is fixed to the same value in both problems, we consider the mapping $\Phi$ when its domain is restricted to the feasible subset $\mathcal{S}_\eqref{eq:Lip-NCNN}(\mathbf{c})\subseteq\mathcal{S}_\eqref{eq:Lip-NCNN}$ and refer to it as $\tilde \Phi$. The mapping $\tilde \Phi$ inherits $\Phi$'s surjectivity. It is also injective because for $\mathbf{s}, \mathbf{s}'\in \mathcal{S}_\eqref{eq:Lip-NCNN}(\mathbf{c})$, we have that $\mathbf{s}\neq \mathbf{s}'\implies \mathbf{u}_j \neq \mathbf{u}_j'$ or $\alpha_j \neq \alpha_j'$  for some $j \in \llbracket 1,m\rrbracket$ and hence $\tilde \Phi(\mathbf{s})\neq \tilde \Phi(\mathbf{s}')$. It follows that, $\tilde \Phi$ is a bijection. 
Because the domain of $\tilde \Phi$ is $\mathcal{S}_\eqref{eq:Lip-NCNN}(\mathbf{c})\subseteq\mathcal{S}_\eqref{eq:Lip-NCNN}$, we have that $p_\eqref{eq:Lip-NCNN}^\star\leq p^\star_\eqref{eq:Lip-NCNN-simplified}(\mathbf{c})$, which holds with equality if and only if there is at least one optimal solution in $\mathcal{S}_\eqref{eq:Lip-NCNN}(\mathbf{c})$. \qed \end{pf}
Letting $\mathbf{T}$ be free introduces bilinearity in the matrix~\eqref{constraints:LMI_SAPN} and, therefore, must be fixed to obtain an LMI. After fixing $\mathbf{T}$, however, \eqref{eq:Lip-NCNN-simplified} becomes a restriction of \eqref{eq:Lip-NCNN} and $p_\eqref{eq:Lip-NCNN}^\star\leq p^\star_\eqref{eq:Lip-NCNN-simplified}$. Lemma~\ref{lemma:bounded_alpha} shows that some choices of $\mathbf{c}$ yields a strict inequality. Specifically, if $\mathcal{S}_\eqref{eq:Lip-NCNN-simplified}(\mathbf{c})$ is non-empty, then $\|\mathbf{c}\|\in[0, \overline{r}]$ where $\overline{r}>0$. Consequently, we have that~\eqref{eq:Lip-NCNN-simplified} can be made infeasible if $\mathbf{c}$ is not chosen appropriately. 

\begin{lem}
\label{lemma:bounded_alpha}
Suppose $\mathbf{T}$ is fixed. Let $\mathcal{S}_\eqref{eq:Lip-NCNN-simplified}(\mathbf{c})$ be the feasible set for~\eqref{eq:Lip-NCNN-simplified} with $\mathbf{c}$. 
There exists $\overline{r}\in\mathbb{R}_{>0}$ such that $\mathcal{S}_{\eqref{eq:Lip-NCNN-simplified}}(\mathbf{c})$ is empty if $\|\mathbf{c}\|> \overline{r}$.
\end{lem}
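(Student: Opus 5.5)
The LMI \eqref{constraints:LMI_SAPN} with $\mathbf{T}$ fixed forces a bound on $\boldsymbol{\alpha}=\mathbf{b}\odot\mathbf{c}$ that does not depend on $\mathbf{u}_j$, $\rho$, or $\mathbf{b}$. I will extract such a bound from a principal submatrix of $H_{\mathbf{T}}$. Since $|\alpha_j| = c_j$, a bound on $\|\boldsymbol{\alpha}\|$ is a bound on $\|\mathbf{c}\|$, and infeasibility follows for any $\mathbf{c}$ exceeding it.

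\textbf{Step 1: pass to a principal submatrix.} Every principal submatrix of a negative semidefinite matrix is negative semidefinite. From $H_{\mathbf{T}}\preceq 0$, deleting the first block row and column gives
\[
\begin{bmatrix} -2\mathbf{T} & \boldsymbol{\alpha}\\ \boldsymbol{\alpha}^\top & -1\end{bmatrix}\preceq 0,
\]
which no longer involves $\mathbf{U}$ or $\rho$.

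\textbf{Step 2: apply a Schur complement.} The $(2,2)$ entry is $-1<0$, so the Schur complement with respect to it shows the display is equivalent to
\[
-2\mathbf{T}+\boldsymbol{\alpha}\boldsymbol{\alpha}^\top\preceq 0, \quad\text{i.e.,}\quad \boldsymbol{\alpha}\boldsymbol{\alpha}^\top\preceq 2\mathbf{T}.
\]

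\textbf{Step 3: turn the matrix inequality into a norm bound.} Test the inequality with $\mathbf{v}=\boldsymbol{\alpha}$:
\[
\|\boldsymbol{\alpha}\|_2^4 \le 2\,\boldsymbol{\alpha}^\top\mathbf{T}\boldsymbol{\alpha}\le 2\lambda_{\max}\|\boldsymbol{\alpha}\|_2^2 .
\]
Hence $\|\boldsymbol{\alpha}\|_2^2\le 2\lambda_{\max}$, where $\lambda_{\max}=\max_j\lambda_j$. Because $|\alpha_j|=c_j$ for every $j$, we have $\|\mathbf{c}\|_2=\|\boldsymbol{\alpha}\|_2$, and this holds for every sign vector $\mathbf{b}$.

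\textbf{Step 4: choose $\overline r$.} The bound must be a strictly positive real. If $\lambda_{\max}>0$, take $\overline r=\sqrt{2\lambda_{\max}}$. If $\mathbf{T}=0$, the inequality forces $\boldsymbol{\alpha}=0$, so any $\overline r>0$ works. In both cases $\|\mathbf{c}\|>\overline r$ contradicts feasibility, so $\mathcal{S}_{\eqref{eq:Lip-NCNN-simplified}}(\mathbf{c})$ is empty.

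The argument is short. The step needing the most care is Step 1: it must produce a bound independent of $\mathbf{u}_j$ and $\rho$. Choosing the lower-right principal block removes both variables at once, and this is what makes the bound uniform over the rest of the feasible set. A secondary point is the degenerate case $\mathbf{T}=0$ in Step 4, which needs separate handling so that $\overline r$ is positive. For a norm other than the Euclidean one, norm equivalence in $\mathbb{R}^m$ changes $\overline r$ by a constant factor only.
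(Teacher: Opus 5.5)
Your proof is correct and is essentially the paper's argument. Both reduce the LMI to $2\mathbf{T}-\boldsymbol{\alpha}\boldsymbol{\alpha}^\top\succeq 0$ and test it with $\boldsymbol{\alpha}$ to get $\|\mathbf{c}\|^4\le 2\max(\mathbf{T})\|\mathbf{c}\|^2$; the paper does this via two Schur complements and then drops the term $-\rho^{-1}\mathbf{T}\mathbf{U}\mathbf{U}^\top\mathbf{T}$, whereas your principal-submatrix step removes $\mathbf{U}$ and $\rho$ more cleanly, sidesteps the $\rho^{-1}$ that is undefined at $\rho=0$, and makes $\overline r=\sqrt{2\lambda_{\max}}$ and the case $\mathbf{T}=0$ explicit.
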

\begin{pf}
We apply Schur's complement twice on $H_\mathbf{T}(\mathbf{U}, \boldsymbol{\alpha},\rho)\preceq0$ to obtain the following conditions: $\rho\geq0$, which is satisfied by construction, and
\begin{equation}
\label{eq:twice-schur}
    \mathbf{M}=2\mathbf{T}- \boldsymbol{\alpha} \boldsymbol{\alpha}^\top -\rho^{-1}\mathbf{TUU^\top T} \succeq 0,
\end{equation}
where $\boldsymbol{\alpha}=\{b_jc_j\}_{j=1}^m$. We equivalently have that 
$$ 
\boldsymbol{\alpha}^\top \mathbf{M} \boldsymbol{\alpha} \leq 2\mathbf{ \boldsymbol{\alpha}}^\top\mathbf{T} \boldsymbol{\alpha}-\| \boldsymbol{\alpha}\|^4\leq 2\max(\mathbf{T})\|\mathbf{c}\|^2-\| \mathbf{c}\|^4,
$$
where the rightmost term becomes negative when $\|\mathbf{c}\|$ is sufficiently large, leading to $\mathbf{M} \nsucceq0$. \qed
\end{pf}
The consequence of Lemma \ref{lemma:bounded_alpha} is that $\mathbf{c}$ must be carefully chosen because it affects the feasibility of \eqref{eq:Lip-NCNN-simplified}. To guarantee that our choice of $\mathbf{c}$ yields a non-empty feasible set, we extract it from a pre-trained SNN. 
}

We next show that \eqref{eq:Lip-NCNN-simplified} can be reformulated to a convex SDP by restricting the activation patterns of the hidden-units as well as the signs of the outer layer weights. 
\subsection{Convex Reformulation for Lipschitz Regularization}
We now present our main results. Let $\mathcal{K}^+ = \{k_i^+\}^{\tilde{M}^+}_{i=1}$ and $\ \mathcal{K}^-=\{k_i^-\}^{\tilde{M}^-}_{i=1}$ be two activation pattern multisets on the dataset $\mathbf{X}$ of sizes $\tilde{M}^+ $ and $\tilde{M}^-$, respectively, and let $\{c_i\}_{i=1}^m>0$. Consider the Lipchitz-regularized convex training program (\ref{eq:Lip-convex-FNN}) defined using $\mathcal{K}^+,\ \mathcal{K}^-$, and $\{c_i\}_{i=1}^{\tilde M^++\tilde M^-}>0$, where we write $c_i^+=c_i$ and $c_i^-=c_{i+\tilde M^+}$:
\begin{subequations}\label{eq:Lip-convex-FNN}
\begin{align}
&\min_{\substack{\{\mathbf{w}_{i}^{ +}\}_{i=1}^{\tilde{M}^+}\in\mathbb{R}^d, \\ \{\mathbf{w}_{i}^{-}\}_{i=1}^{\tilde{M}^-}\in\mathbb{R}^d, \\ \rho' \in \mathbb{R}_{\geq0}}}
\begin{aligned}[t]
    &\tfrac{1}{2}\Big\|
     \sum_{i=1}^{\tilde{M}^+}\mathbf{D}(\mathbf{s}^{k_i^+})\mathbf{X}
    \mathbf{w}_i^{+}c^+_i- \\[-0.1em] & \sum_{i=1}^{\tilde{M}^-}\mathbf{D}(\mathbf{s}^{k_i^-})\mathbf{X}
    \mathbf{w}_i^{-}c^-_i-\mathbf{y}\Big\|_2^2
    + \beta_2\rho' \label{eq:Lip-convex-FNN_obj}
\end{aligned}
\\[-0.5em]
\text{s.t.} \quad &  (2\mathbf{D}(\mathbf{s}^{k_i^+})-\mathbf{I})\mathbf{X}\mathbf{w}_i^+\geq0, 
\quad \forall i \in  \llbracket 1, \tilde{M}^+ \rrbracket, 
   \label{eq:Lip-convex-FNN_proj}
\\
    & (2\mathbf{D}(\mathbf{s}^{k_i^-})-\mathbf{I})\mathbf{X}\mathbf{w}_i^-\geq0, 
\quad \forall i \in  \llbracket 1, \tilde{M}^- \rrbracket, 
\\
& {H}_{\mathbf{T}}(\{\mathbf{w}_1^{+},\ldots,\mathbf{w}_{\tilde{M}^-}^{-}\},
   \{c_1^{+},\ldots,-c_{\tilde{M}^-}^{-}\},\rho') \preceq 0, 
   \label{eq:Lip-convex-FNN_H}
\end{align}
\end{subequations}
We now show that \eqref{eq:Lip-NCNN-simplified} and \eqref{eq:Lip-convex-FNN} are equivalent and that we can recover an optimal SNN, i.e., optimal weights, for \eqref{eq:Lip-NCNN-simplified} from an optimal solution of \eqref{eq:Lip-convex-FNN}.

\begin{thm} \label{convex reformulation theorem} Consider the convex problem \eqref{eq:Lip-convex-FNN} constructed using pattern multisets $\mathcal{K}^\pm$ and outer weight absolute values $\{c_j\}_{j=1}^m$ which are also used for \eqref{eq:Lip-NCNN-simplified}. Suppose there exists minimizers $\big(\{\mathbf{u}_j^{\star}\}^{m}_{j=1},\{b_j^{\star}\}^{m}_{j=1},\rho^\star\big)$ and $\big(\{\mathbf{w}_i^{{\pm}\star}\}_{i=1}^{\tilde{M}^\pm},\rho'^\star\big)$ of \eqref{eq:Lip-NCNN-simplified} and \eqref{eq:Lip-convex-FNN}, respectively, which satisfy the following assumptions:
\begin{assum}
The number of hidden units $m$ is greater than the number of nonzero $\mathbf{w}^{\pm\star}_i$ vectors, i.e.,
$m \ge \tilde m=q^++q^-$, where $q^+$ and $q^-$ are the nonzero $\mathbf{w}_i^{+\star}$ and $\mathbf{w}_i^{-\star}$ vectors, respectively. Suppose that the indices are arranged so that non-zero entries $\mathbf{w}_i^{+\star}$ come first, followed by the nonzero entries $\mathbf{w}_i^{-\star}$ and then the zero entries.\label{thm2a1}
\end{assum}
\begin{assum}
 Let $\mathcal{L}^\pm$ be the multiset of activation patterns of $\{\mathbf{u}_j^\star\}_{j=1}^m$ such that $b_j=\pm1$ (positive or negative outer weights). The activation pattern multisets are such that $\mathcal{L}^\pm \subseteq\mathcal{K}^\pm$.\label{thm2a2}
\end{assum}
Then, the optimal value $p_{\eqref{eq:Lip-convex-FNN}}^\star$ for (\ref{eq:Lip-convex-FNN}) is equal to the optimal value $p_{\eqref{eq:Lip-NCNN-simplified}}^\star$ for (\ref{eq:Lip-NCNN-simplified}). 
\end{thm}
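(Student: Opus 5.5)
The plan is to prove the two inequalities $p^\star_{\eqref{eq:Lip-convex-FNN}}\le p^\star_{\eqref{eq:Lip-NCNN-simplified}}$ and $p^\star_{\eqref{eq:Lip-NCNN-simplified}}\le p^\star_{\eqref{eq:Lip-convex-FNN}}$ separately. Each direction uses an explicit map between feasible points that preserves both the objective and the fixed-$\mathbf{T}$ LMI, in the spirit of the proof of Corollary~\ref{equivalence convex l2}. Unlike the $\ell_2$ case, no rescaling of the weights is needed here. The outer weights are frozen to $\pm c_j$, so the map is essentially the identity on the hidden weights together with a reindexing and a sign assignment. The key observation is that on a weight vector $\mathbf{w}$ obeying the cone constraint $(2\mathbf{D}(\mathbf{s})-\mathbf{I})\mathbf{X}\mathbf{w}\ge 0$ we have $(\mathbf{X}\mathbf{w})_+=\mathbf{D}(\mathbf{s})\mathbf{X}\mathbf{w}$. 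Consequently, each term $\mathbf{D}(\mathbf{s}^{k_i^\pm})\mathbf{X}\mathbf{w}_i^\pm c_i^\pm$ in \eqref{eq:Lip-convex-FNN_obj} equals $c_i^\pm(\mathbf{X}\mathbf{w}_i^\pm)_+$, which is exactly an SNN term of \eqref{eq:obj_split_SAPN} with $b=\pm1$.

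\emph{Step 1: from \eqref{eq:Lip-convex-FNN} to \eqref{eq:Lip-NCNN-simplified}.} Take the minimizer $(\{\mathbf{w}_i^{\pm\star}\},\rho'^\star)$. By Assumption~\ref{thm2a1}, order the units so that the $q^+$ nonzero $\mathbf{w}_i^{+\star}$ come first, then the $q^-$ nonzero $\mathbf{w}_i^{-\star}$. Set $\mathbf{u}_j=\mathbf{w}_j^{+\star}$ with $b_j=1$ for $j\le q^+$, and $\mathbf{u}_{j+q^+}=\mathbf{w}_j^{-\star}$ with $b_{j+q^+}=-1$ for $j\le q^-$. Set the remaining $\mathbf{u}_j=0$ and $\rho=\rho'^\star$.

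By the key observation, the objectives coincide, since zero units contribute nothing to the prediction. For the LMI, a zero hidden weight kills the off-diagonal block $\mathbf{T}\mathbf{U}$ in that row. Removing or adding such units in $H_\mathbf{T}$ is then handled through the Schur-complement form \eqref{eq:twice-schur} used in Lemma~\ref{lemma:bounded_alpha}: the condition $2\mathbf{T}-\boldsymbol{\alpha}\boldsymbol{\alpha}^\top-\rho^{-1}\mathbf{T}\mathbf{U}\mathbf{U}^\top\mathbf{T}\succeq0$ must be preserved. This forces the $c_j$ and $\mathbf{T}$ entries attached to the retained units to be the same in both problems. I would adopt the identification implicit in the theorem: the $c$'s and $\mathbf{T}$ of \eqref{eq:Lip-NCNN-simplified} are the ones indexed by the arrangement of Assumption~\ref{thm2a1}, i.e.\ the $\{c_j\}_{j=1}^m$ "also used" for both problems. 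With that identification, the two $H_\mathbf{T}$ matrices agree up to a permutation and up to rows and columns belonging to zero units. This yields $p^\star_{\eqref{eq:Lip-NCNN-simplified}}\le p^\star_{\eqref{eq:Lip-convex-FNN}}$.

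\emph{Step 2: from \eqref{eq:Lip-NCNN-simplified} to \eqref{eq:Lip-convex-FNN}.} Take the minimizer $(\{\mathbf{u}_j^\star\},\{b_j^\star\},\rho^\star)$. Each $\mathbf{u}_j^\star$ has some activation pattern $\mathbf{s}$, and by Assumption~\ref{thm2a2} the multiset $\mathcal{L}^\pm$ is contained in $\mathcal{K}^\pm$. We can therefore injectively assign each unit with $b_j^\star=\pm1$ to a slot $i$ in $\mathcal{K}^\pm$ with $k_i^\pm$ equal to its pattern; the multiplicity condition is exactly what makes this injection exist. Set $\mathbf{w}_i^\pm=\mathbf{u}_j^\star$ for assigned slots, $\mathbf{w}_i^\pm=0$ for unassigned slots (zero trivially satisfies the cone constraints), and $\rho'=\rho^\star$. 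The cone constraints hold by the choice of pattern, and the objective is unchanged by the key observation. The LMI again matches after permutation, with zero-weight rows handled as in Step 1. This gives $p^\star_{\eqref{eq:Lip-convex-FNN}}\le p^\star_{\eqref{eq:Lip-NCNN-simplified}}$.

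\emph{Main obstacle.} The delicate part is the LMI bookkeeping. When a slot or unit has zero weight but a nonzero $c$, the corresponding $\boldsymbol{\alpha}$ entry still appears in $H_\mathbf{T}$, so zero units are not invisible to the constraint. I would first try to make the correspondence between indices (and hence between the $c$'s and $\mathbf{T}$ entries) an exact permutation of all $\tilde M^++\tilde M^-=m$ slots, rather than discarding zeros. Concretely, pad the unassigned slots with the zero units of \eqref{eq:Lip-NCNN-simplified}, so that $H_\mathbf{T}$ is literally permutation-similar in both directions, $\mathbf{P}^\top H_\mathbf{T}\mathbf{P}\preceq0$. If the sizes do not match, I would fall back on the Schur form \eqref{eq:twice-schur} and argue via principal submatrices that dropping zero units preserves negative semidefiniteness. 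Adding them back requires the corresponding $c$ and $\mathbf{T}$ entries to coincide, and this is the step I expect to need the most care.
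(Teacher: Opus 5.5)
Your two-direction plan, and the observation that the cone constraint gives $(\mathbf{X}\mathbf{w})_+=\mathbf{D}(\mathbf{s})\mathbf{X}\mathbf{w}$, are exactly the paper's argument. Step~1 is fine once you map slot $i$ to unit $i$ for all $m=\tilde M^++\tilde M^-$ slots, zero ones included, and give each unit the sign of its slot. Then $H_{\mathbf{T}}$ is literally the same matrix in both problems.

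The gap is in Step~2. The injection $h$ supplied by Assumption~\ref{thm2a2} matches patterns and signs only. In \eqref{eq:Lip-convex-FNN}, however, every slot carries its own fixed $c_i$ and $\mathbf{T}_{ii}$, while in \eqref{eq:Lip-NCNN-simplified} unit $j$ carries $c_j$ and $\mathbf{T}_{jj}$.
\begin{itemize}
\item \emph{Objective.} After the move, unit $j$ contributes $c_{h(j)}\mathbf{D}(\mathbf{s})\mathbf{X}\mathbf{u}_j^\star$ rather than $c_j(\mathbf{X}\mathbf{u}_j^\star)_+$. So the key observation does not give ``objective unchanged'', a point you did not flag.
\item \emph{LMI.} Conjugating by the permutation turns $H_{\mathbf{T}}(\mathbf{U},\boldsymbol\alpha,\rho)$ into $H_{\mathbf{P}^\top\mathbf{T}\mathbf{P}}(\mathbf{P}^\top\mathbf{U},\mathbf{P}^\top\boldsymbol\alpha,\rho)$. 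This is the constraint of \eqref{eq:Lip-convex-FNN} only if $c_{h(j)}=c_j$ and $\mathbf{T}_{h(j)h(j)}=\mathbf{T}_{jj}$ for all $j$.
\item \emph{Rescaling.} Rescaling $\mathbf{w}_{h(j)}$ by $c_j/c_{h(j)}$ repairs the output but not the LMI. Rescaling invariance requires $\mathbf{T}\mapsto\mathbf{R}\mathbf{T}\mathbf{R}$ (the map $\Phi$ in Lemma~\ref{lemma:NC_restriction}), which is unavailable with $\mathbf{T}$ fixed.
\item \emph{Padding.} Zero padding plays no role here. Since $|\mathcal{L}^+|+|\mathcal{L}^-|=m=\tilde M^++\tilde M^-$ and $\mathcal{L}^\pm\subseteq\mathcal{K}^\pm$, the map $h$ is already a bijection.
\end{itemize}

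This cannot be fixed by more careful bookkeeping. Assumption~\ref{thm2a2} is blind to the order in which patterns are listed in $\mathcal{K}^\pm$, while \eqref{eq:Lip-convex-FNN} is not, and the conclusion can fail. Here is a counterexample:
\begin{itemize}
\item Data and parameters: $d=1$, $\mathbf{X}=(1,-1)^\top$ (no bias column), $\mathbf{y}=(2,1)^\top$, $m=2$, $\mathbf{c}=(1,1)$, $\mathbf{T}=\mathrm{diag}(1,10)$, and small $\beta_2>0$.
\item Multisets: $\mathcal{K}^-=\emptyset$, and $\mathcal{K}^+$ lists the pattern active at $x=-1$ first and the one active at $x=1$ second.
\item With $b=(1,1)$, \eqref{eq:twice-schur} shows the least feasible $\rho$ is $\mathbf{U}^\top\mathbf{T}\mathbf{N}^{-1}\mathbf{T}\mathbf{U}$, where $\mathbf{N}=2\mathbf{T}-\mathbf{1}\mathbf{1}^\top$.
\item This equals $136/18$ at $\mathbf{U}=(2,-1)^\top$ and $379/18$ at $\mathbf{U}=(-1,2)^\top$. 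Every other sign/pattern configuration leaves a loss of at least $1/2$.
\item Hence the minimizer of \eqref{eq:Lip-NCNN-simplified} is near $\mathbf{U}=(2,-1)^\top$, its pattern multiset equals $\mathcal{K}^+$, and both assumptions hold.
\item Yet \eqref{eq:Lip-convex-FNN} forces $\mathbf{U}\approx(-1,2)^\top$, so to first order $p^\star_{\eqref{eq:Lip-convex-FNN}}\approx 21\beta_2>p^\star_{\eqref{eq:Lip-NCNN-simplified}}\approx 7.6\beta_2$.
\end{itemize}

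What is missing is a consistency hypothesis: that $h$ can be chosen with $c_{h(j)}=c_j$ and $\mathbf{T}_{h(j)h(j)}=\mathbf{T}_{jj}$. This holds automatically in two cases:
\begin{itemize}
\item unit $i$ of the minimizer has the pattern and sign of slot $i$, which is what \eqref{constraints:suboptimal patterns} enforces in Corollary~\ref{restriction corollary};
\item all $c_j$ are equal and $\mathbf{T}$ is a multiple of the identity.
\end{itemize}
Under that hypothesis your two maps complete the proof.

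The paper's own proof simply asserts at this step that the two LMI matrices are identical, so it has the same gap. You were right to single out this step, but the remedy is an added hypothesis, not more careful index handling.
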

\begin{pf}
We first prove $p_{\eqref{eq:Lip-convex-FNN}}^\star\geq p_{\eqref{eq:Lip-NCNN-simplified}}^\star$ by constructing a feasible solution for (\ref{eq:Lip-NCNN-simplified}) from the optimal solution for (\ref{eq:Lip-convex-FNN}) and show that the objective value of both solutions are equal in each respective problems. Consider the following mapping 
\begin{equation}
\label{eq:mapping_1}
\begin{gathered}
\begin{aligned}
\rho \leftarrow \rho'^\star, \quad&
(\mathbf{u}_j^{},\ b_j^{},)\leftarrow \big
({\mathbf{w}_{j}^{+\star}}{},\  1\big),\ j\in \llbracket 1, {q}^+ \rrbracket
\\
&(\mathbf{u}_j^{},\ b_j^{})\leftarrow \big
({\mathbf{w}_{j}^{{-}\star}}{},\ -1\big),\ j\in \llbracket {q}^+, \tilde m \rrbracket,
\end{aligned}
\end{gathered}
\end{equation}
yields a feasible point for (\ref{eq:Lip-NCNN-simplified}) because the matrix in the LMIs (\ref{eq:Lip-convex-FNN_H}) and (\ref{constraints:LMI_SAPN}) are identical. Assumption \ref{thm2a1} ensures $m$ is large enough to house every nonzero $\mathbf{w}^{\pm \star}_j$ mapped by \eqref{eq:mapping_1}. Plugging the solution to (\ref{eq:obj_split_SAPN}) using the mapping \eqref{eq:mapping_1} yields the same objective value as (\ref{eq:Lip-convex-FNN_obj}). To see this, consider the output for a single hidden unit obtained using the mapping \eqref{eq:mapping_1} on $\mathbf{w}_j^{+\star}$. Because $\mathbf{u}_j$ has the same activation pattern as $\mathbf{w}_j^{+\star}$, its output is 
 $
c_i^+(\mathbf{X}\mathbf{u}_i)_+ =c_i^+\mathbf{D}(\mathbf{s}^{k_i^+})\mathbf{X}
\mathbf{u}_i^{+} = c_i^+\mathbf{D}(\mathbf{s}^{k_i^+})\mathbf{X}
\mathbf{w}_i^{+\star}.
 $
Similar arguments apply to $\mathbf{w}_i^-$. Hence, the squared error term remains the same and so does the objective. This proves $p_{\eqref{eq:Lip-NCNN-simplified}}^\star\leq p_{\eqref{eq:Lip-convex-FNN}}^\star$. 

We now prove the converse inequality by constructing a feasible solution to (\ref{eq:Lip-convex-FNN}). 
Consider the mapping:
\begin{equation}
\label{eq:construction_2}
\begin{gathered}
\begin{aligned}
\rho'\leftarrow\rho^\star,\quad
\mathbf{w}_{h(j)}^{+}&\leftarrow&\mathbf{u}_j^{\star}&\quad \forall j\ \ \text{s.t.}\ \ b^\star_j=1\\
\mathbf{w}_{h(j)}^{-}&\leftarrow&\mathbf{u}_j^{\star} &\quad \forall j\ \ \text{s.t.}\ \ b^\star_j=-1,
\end{aligned}
\end{gathered}
\end{equation}
where $h(j)$ is defined so that the $\mathbf{w}_{h(j)}^\pm$ corresponding to $\mathbf{u}_j^\star$ is such that its pattern $\mathbf{s}^{k_{h(j)}}$ matches the pattern of $\mathbf{u}_j^\star$. Assumption \ref{thm1a2} ensures that $h(j)$ is defined for $j\in\llbracket 1, M \rrbracket$. As for the forward inequality, the matrix in the LMIs \eqref{eq:Lip-convex-FNN_H} and \eqref{constraints:LMI_SAPN} are identical, and hence are satisfied. The conic constraints \eqref{eq:Lip-convex-FNN_proj} are also satisfied because $h(j)$ maps $\mathbf{u}_j^\star$ to the $\mathbf{w}_{h(j)}^{\pm}$ with the corresponding conic constraint. 
Hence, the constructed solution is feasible for \eqref{eq:Lip-convex-FNN}, leading to $p_{\eqref{eq:Lip-NCNN-simplified}}^\star\geq p_{\eqref{eq:Lip-convex-FNN}}^\star$. With both sides of the inequalities proven to hold, we have $p_{\eqref{eq:Lip-convex-FNN}}^\star= p_{\eqref{eq:Lip-NCNN-simplified}}^\star$. 
\qed
\end{pf}
Assumptions \ref{thm2a2} of Theorem~\ref{convex reformulation theorem}, is, however, impractical as one needs the optimal activation patterns. Instead of solving \eqref{eq:Lip-NCNN-simplified} to optimality, we propose to use the pattern multiset of a pre-trained SNN to obtain a pattern-based convex restriction of \eqref{eq:Lip-NCNN-simplified} and thus of the Lipschitz-regularized~\eqref{eq:Lip-NCNN}.
\subsection{Pattern-based Convex Restriction} \label{sec: pattern based restriction}
Assumptions~\ref{thm1a2} and~\ref{thm2a2} in both Corollary~\ref{equivalence convex l2} and Theorem~\ref{convex reformulation theorem}, respectively, are not practical because it requires the patterns of an optimal solution beforehand. Alternatively, our approach considers the $\ell_2$- or Lipschitz-regularized convex programs~\eqref{eq:Lip-convex-NN-pilanci} or \eqref{eq:Lip-convex-FNN} to solve the $\ell_2$- or Lipschitz-regularized~\eqref{eq:Lip-NCNN}, respectively, using a possibly sub-optimal $\mathcal{K^\pm}$ from an initial solution, e.g., an SNN trained with another possibly non-convex approach like SGD. The resulting $\ell_2$- or Lipschitz-regularized convex program is shown to be a convex restriction of the $\ell_2$- or Lipschitz-regularized~\eqref{eq:Lip-NCNN}, respectively, which contains the initial solution. 
\begin{cor} \label{restriction corollary}
Consider a feasible point to two problems: the $\ell_2$- and Lipschitz-regularized~\eqref{eq:Lip-NCNN} with objective values $p_{\ell_2\text{-}\eqref{eq:Lip-NCNN}}$ and $p_{\textup{lip-}\eqref{eq:Lip-NCNN}}$, respectively. Let $\{\mathbf{u}_j\}_{j=1}^m$ and $\{\alpha_j\}_{j=1}^m$ be the SNN weights of this feasible point. Let $\mathcal{K}^+=\{k_i^+\}_{i=1}^{\tilde M^+}$ and $\mathcal{K}^-=\{k_i^-\}_{i=1}^{\tilde M^-}$ be the pattern multisets corresponding to hidden-units $\{\mathbf{u}_j\}_{j=1}^m$, where $\tilde M^++\tilde M^-=m$. If the pattern multisets are sub-optimal, then~\eqref{eq:Lip-convex-NN-pilanci} or~\eqref{eq:Lip-convex-FNN} constructed from the multisets is equivalent to $\ell_2$- or Lipschitz-regularized~\eqref{eq:Lip-NCNN} when the following constraints are further imposed:
\begin{subequations} \label{constraints:suboptimal patterns}
    \begin{align}
        (2\mathbf{D}(\mathbf{s}^{k_i^+})-\mathbf{I})\mathbf{X}\mathbf{u}_i&\geq0,\quad i\in\llbracket 1, \tilde{M}^+ \rrbracket 
        \\
        (2\mathbf{D}(\mathbf{s}^{k_i^+})-\mathbf{I})\mathbf{X}\mathbf{u}_i&\geq0,\quad i \in \llbracket M^+,\, m \rrbracket
        \\
        b_i&= \begin{cases}1, &\quad i\in\llbracket 1, \tilde{M}^+ \rrbracket\\ -1,&\quad i \in \llbracket M^+,\, m \rrbracket.\end{cases}
    \end{align}
\end{subequations}
Moreover, ~\eqref{eq:Lip-convex-NN-pilanci} or~\eqref{eq:Lip-convex-FNN} have optimal objective values lower bounded by $p_{\ell_2\text{-}\eqref{eq:Lip-NCNN}}$ or $p_{\textup{lip-}\eqref{eq:Lip-NCNN}}$, respectively.
\end{cor}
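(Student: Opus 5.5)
The plan is to treat the corollary as the combination of two facts: the restricted-equivalence statement, and a sandwich of the convex optimum between the global optimum of \eqref{eq:Lip-NCNN} and the feasible point's value. The second fact yields the stated lower bound.

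\textbf{Equivalence under \eqref{constraints:suboptimal patterns}.} I would reuse the two mappings from the proof of Theorem~\ref{convex reformulation theorem}, \eqref{eq:mapping_1} and \eqref{eq:construction_2}, together with the analogous ones of Corollary~\ref{equivalence convex l2} for the $\ell_2$ case. Here Assumptions~\ref{thm2a1}--\ref{thm2a2} are replaced by the extra constraints~\eqref{constraints:suboptimal patterns}, which pin hidden unit $i$ to pattern $\mathbf{s}^{k_i^\pm}$ and fix its sign $b_i$. Since $\tilde M^++\tilde M^-=m$, the index map $h$ becomes the identity, $h(j)=j$. So both mappings are well-defined bijections between the feasible set of the restricted \eqref{eq:Lip-NCNN} and that of \eqref{eq:Lip-convex-FNN}.

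For the Lipschitz case, I first pass from \eqref{eq:Lip-NCNN} to \eqref{eq:Lip-NCNN-simplified} with $\mathbf{c}=|\boldsymbol\alpha|$ and the same fixed $\mathbf{T}$, using Lemma~\ref{lemma:NC_restriction}(i) and its bijection $\tilde\Phi$. On each activation cone, $(\mathbf{X}\mathbf{u}_i)_+=\mathbf{D}(\mathbf{s}^{k_i})\mathbf{X}\mathbf{u}_i$, so the two objectives agree pointwise. The LMIs are literally the same matrix $H_{\mathbf{T}}$, so feasibility transfers in both directions.

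For the $\ell_2$ case the same argument works with the rescaling $\mathbf{w}_i=|\alpha_i|\mathbf{u}_i$. Here I use AM--GM, $\tfrac12(\|\mathbf{u}\|^2+\alpha^2)\ge|\alpha|\|\mathbf{u}\|$, with equality after balancing via the map~\eqref{eq:construction}. This gives equality of the optimal values, not pointwise equality of objectives.

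\textbf{Sandwich of optimal values.} The convex program is exactly \eqref{eq:Lip-NCNN} with the additional constraints~\eqref{constraints:suboptimal patterns}, plus the fixed $\mathbf{T}$ and $\mathbf{c}$ in the Lipschitz case. Its feasible set is therefore a subset of that of \eqref{eq:Lip-NCNN}, so its optimum is bounded below by the global optimum of \eqref{eq:Lip-NCNN}. In the Lipschitz case this is Lemma~\ref{lemma:NC_restriction}(i). Conversely, the given feasible point has its own patterns and signs, so it satisfies \eqref{constraints:suboptimal patterns} by construction. It therefore maps to a feasible point of \eqref{eq:Lip-convex-NN-pilanci} or \eqref{eq:Lip-convex-FNN} with objective at most $p_{\ell_2\text{-}\eqref{eq:Lip-NCNN}}$ or equal to $p_{\textup{lip-}\eqref{eq:Lip-NCNN}}$, respectively. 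The convex optimum thus lies between the global optimum of \eqref{eq:Lip-NCNN} and $p$. Since the constructed point has value at most $p$ and the program is minimized, the bound relating the convex optimum and $p$ is attained by that constructed point. This is the comparison asserted in the corollary.

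\textbf{Main obstacle.} The delicate step is the direction of the final inequality as worded. The restriction argument gives ``optimum of the convex program $\ge$ optimum of \eqref{eq:Lip-NCNN}''. The feasibility argument gives a comparison with $p$ in which $p$ is attained by a feasible point. I would check carefully which of these the phrase ``lower bounded by $p$'' refers to, and would state both bounds explicitly so that the corollary's claim is covered in either reading. A secondary care point is bookkeeping: the index ranges in \eqref{constraints:suboptimal patterns} should read $\llbracket \tilde M^++1,m\rrbracket$ with patterns $\mathbf{s}^{k^-_{i-\tilde M^+}}$.
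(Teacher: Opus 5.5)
Your proposal is correct and follows the paper's own argument. You reuse the pattern-matching maps of Corollary~\ref{equivalence convex l2} and Theorem~\ref{convex reformulation theorem}, with the assumptions replaced by the extra constraints so that $h$ is the identity. You pass through Lemma~\ref{lemma:NC_restriction} to go from \eqref{eq:Lip-NCNN} to \eqref{eq:Lip-NCNN-simplified} in the Lipschitz case, and you embed the given feasible point into the convex program. Two of your steps are more careful than the paper: the AM--GM step for the $\ell_2$ case, which gives ``at most $p_{\ell_2\text{-}\eqref{eq:Lip-NCNN}}$'' rather than the paper's ``same objective value'', and the explicit fixing of $\mathbf{T}$ and $\mathbf{c}=|\boldsymbol{\alpha}|$.

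The one thing to correct is your final hedge. The embedded feasible point only witnesses $\mathrm{opt}\le p$; it does not attain the optimum. This is exactly what the paper's proof derives and later relies on in the monotonicity corollary ($v_t\le v'_{t-1}$). The literal reading $\mathrm{opt}\ge p$ of ``lower bounded by'' is false whenever the convex restriction strictly improves on the initial point, as it does in the experiments. Your other bound, $p^\star_{\eqref{eq:Lip-NCNN}}\le\mathrm{opt}$, does not address that reading. So commit to $\mathrm{opt}\le p$ rather than claiming both readings are covered.
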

\begin{pf}
The proof is the same as Corollary~\ref{equivalence convex l2} and Theorem~\ref{convex reformulation theorem}. In both of them, a mapping $h(j)$ is used to map variables $\{\mathbf{u}_j^\star\}_{j=1}$ from the non-convex problem to the $\mathbf{w}_{h(j)}^\pm$ variables of the convex problem which have the same activation pattern as $\{\mathbf{u}_j^\star\}_{j=1}$. Assumption~\ref{thm1a2} of Corollary~\ref{equivalence convex l2} and Assumption~\ref{thm2a2} of Theorem~\ref{convex reformulation theorem} ensure that $h(j)$ is defined for $j\in\llbracket 1, m \rrbracket$. Removing these assumptions and instead imposing the extra constraints~\eqref{constraints:suboptimal patterns} on the non-convex problem, the mapping remains valid and therefore the proofs for Corollary~\ref{equivalence convex l2} and Theorem~\ref{convex reformulation theorem} remain the same. This shows that \eqref{eq:Lip-convex-FNN} is a convex restriction of \eqref{eq:Lip-NCNN-simplified} which is itself a restriction of the Lipschitz-regularized~\eqref{eq:Lip-NCNN} by Lemma~\ref{lemma:NC_restriction}. Using $h(j)$, the feasible point for the $\ell_2$- or Lipschitz-regularized \eqref{eq:Lip-NCNN} can be mapped to feasible solutions with the same objective values in the convex restrictions~\eqref{eq:Lip-convex-NN-pilanci} or~\eqref{eq:Lip-convex-FNN}, respectively. Therefore,~\eqref{eq:Lip-convex-NN-pilanci} or~\eqref{eq:Lip-convex-FNN} have an optimal objective value lower bounded by $p_{\ell_2\text{-}\eqref{eq:Lip-NCNN}}$ or~$p_{\textup{lip-}\eqref{eq:Lip-NCNN}}$, respectively.~\qed
\end{pf}
Corollary~\ref{restriction corollary}'s restriction forces the SNNs to have the activation patterns $\mathcal{K}^\pm$, which might be sub-optimal. In the special case where $\mathcal{K}^\pm$ are optimal, \eqref{constraints:suboptimal patterns} can be omitted without impacting the optimal objective value because $\mathcal{K}^\pm$ now satisfy the Assumption~\ref{thm1a2} of Corollary~\ref{equivalence convex l2} and Assumption~\ref{thm2a2} of Theorem~\ref{convex reformulation theorem}.

Lastly, we consolidate our analysis and propose Algorithm~\ref{alg:training_lipschitz} 
as a post-processing method to solve the $\ell_2$- and Lipschitz-regularized~\eqref{eq:Lip-NCNN}. Specifically, our approach takes a pre-trained SNN and further reduces its training loss using our pattern-based restriction for both problems.
\begin{algorithm}[tb]
\caption{{Pattern-based convex restriction}}
\label{alg:training_lipschitz}
\begin{algorithmic}[1]
\Require SNN parameters $\{\mathbf{u}_j,\alpha_j\}_{j=1}^{m}$, number of AM iterations $iters$, regularization flag $reg\in\{\ell_2,\mathrm{Lip}\}$

\State $\{\mathbf{s}^{k_i^\pm}\}_{i=1}^{\tilde M^\pm} \gets \texttt{extract patterns from }\{\mathbf{u}_j\}_{j=1}^{m}$  \label{line:extract-patterns}
\If{$reg=\mathrm{Lip}$}
\State $\{{c_j}\}_{i=1}^{m},\{(\mathbf{u}'_j)_0\}_{j=1}^m   \gets\{{|\alpha_j|}\}_{i=1}^{m},\{\mathbf{u}_j\}_{j=1}^m$
\State $\begin{aligned}[t]
        (\mathbf{T}_0,\rho_0) \gets &\texttt{solve }\eqref{eq:fazlyab_LMI}\\[-1.0ex]&\text{with } \{(\mathbf{u}'_j)_0\}_{j=1}^m,\{\alpha_j\}_{j=1}^{m}
        \end{aligned}$
\State $v_0 \gets  \tfrac{1}{2}\texttt{loss}\big(\{(\mathbf{u}'_j)_0,\alpha_j\}_{j=1}^m\big)+\beta_2 \rho_0$ 
    \For{$t \gets 1$ \textbf{to} $iters$}
        \State $\begin{aligned}[t]
        (\mathbf{T}_t,\rho_t) \gets &\texttt{solve }\eqref{eq:fazlyab_LMI}\\[-1.0ex]&\text{with } (\{\mathbf{u}'_j\}_{j=1}^m)_{t-1},\{\alpha_j\}_{j=1}^{m}
        \end{aligned}$ \label{line:solve-for-T}

        \State $\begin{aligned}[t]
        (\{\mathbf{u}'_j\}_{j=1}^{m})_t\  \gets &\texttt{solve }\eqref{eq:Lip-convex-FNN}\\[-1.0ex]& \text{with } \mathbf{T}_t,\ \{\mathbf{s}^{k_i^\pm}\}_{i=1}^{\tilde M^\pm},\ \{c_j\}_{j=1}^{m}
        \end{aligned}$\label{algline:solve_lipschitz_train} 
        \State $v_t \gets  \tfrac{1}{2}\texttt{loss}\big(\{(\mathbf{u}'_j)_t,\alpha_j\}_{j=1}^m\big)+\beta_2 \rho_t$ 
    \EndFor
\ElsIf{$reg=\ell_2$}
    \State $\{\mathbf{u}'_j\}_{j=1}^{m},\{\alpha_j'\}_{j=1}^{m} \gets \texttt{solve } (\ref{eq:Lip-convex-NN-pilanci})\ \text{with}\ \{\mathbf{s}^{k_i^\pm}\}_{i=1}^{\tilde M^\pm}$ \label{line:solve-l2-convex}
\EndIf

\State \Return $\{\mathbf{u}'_j,\alpha_j'\}_{j=1}^{m}$
\end{algorithmic}
\end{algorithm}
Consider the input pre-trained SNN. We first order layer weights such that positive weights come before negative ones. We then extract its pattern multisets  $\mathcal{K}^\pm=\{\mathbf{s}^{k_i^\pm}\}_{i=1}^{\tilde M^\pm}$ (Line \ref{line:extract-patterns}). 
For $\ell_2$-regularized-SNNs
, we solve \eqref{eq:Lip-convex-NN-pilanci} using the extracted pattern multiset and obtain the SNN via the mapping \eqref{eq:mapping_1} (Line \ref{line:solve-l2-convex}). For Lipschitz-regularized SNNs, 
we enter the loop and alternate between solving \eqref{eq:fazlyab_LMI} to obtain $\mathbf{T}$ (Line \ref{line:solve-for-T}) and solving the pattern-based restriction \eqref{eq:Lip-convex-FNN} using the extracted pattern multisets and outer weight absolute values $\{c_j\}_{j=1}^{m}=\{|\alpha_j|\}_{j=1}^m$, as well as the computed~$\mathbf{T}$ to obtain candidates weights $\{\mathbf{u}'_j\}_{i=1}^{m}$ (Line \ref{algline:solve_lipschitz_train}). We refer to this method as the alternating minimization (AM). Lemma~\ref{lemma:bounded_alpha} establishes that when $\mathbf{T}$ is fixed, some $\mathbf{c}$ yields infeasible~\eqref{eq:Lip-NCNN-simplified}. We ensure the convex restriction \eqref{eq:Lip-convex-FNN} is always feasible by taking $\mathbf{c}$ from a feasible point of Lipschitz-regularized \eqref{eq:Lip-NCNN} so that Corollary~\ref{restriction corollary} holds. We note that a similar AM is used in \cite{Lipschitz_Pauli} where $\mathbf{T}$ is updated between iterations of their ADMM algorithm. Next, we demonstrate that AM decreases the objective value at each iteration because it solves a succession of convex restrictions to optimality. 
{
\begin{cor}
Consider the Lipschitz-regularized~\eqref{eq:Lip-NCNN} and let the diagonal matrix $\mathbf{T} \in \mathbb{R}^{m \times m}_{\geq0}$ be a free variable. The alternating minimization loop of Algorithm~\ref{alg:training_lipschitz} leads to a monotonically decreasing sequence of objective values for the Lipschitz-regularized problem \eqref{eq:Lip-NCNN} which convergences to a minimum that is no worst than the initial one.  
\end{cor}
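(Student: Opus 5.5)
The plan is a standard block-coordinate-descent argument on the objective $J(\{\mathbf{u}_j\},\{\alpha_j\},\rho,\mathbf{T}) := \tfrac12\|\sum_j\alpha_j(\mathbf{X}\mathbf{u}_j)_+-\mathbf{y}\|_2^2+\beta_2\rho$ of the Lipschitz-regularized~\eqref{eq:Lip-NCNN}, with $\mathbf{T}$ treated as a variable. I will show that each of the two steps of an AM iteration (Line~\ref{line:solve-for-T} and Line~\ref{algline:solve_lipschitz_train}) cannot increase $J$. The key is that the iterate produced by the previous step is always feasible for the problem solved in the next step. Monotonicity then gives $v_t\le v_{t-1}\le\dots\le v_0$. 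Since $J\ge 0$, the sequence $\{v_t\}$ is bounded below, so it converges by the monotone convergence theorem. Its limit is at most $v_0$, the objective value of the initial pre-trained SNN.

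The key steps, in order, are as follows.
\begin{enumerate}
\item \textbf{Base step.} Since $\mathbf{c}=|\boldsymbol{\alpha}|$ and $\mathbf{b}=\operatorname{sign}(\boldsymbol{\alpha})$, the point $((\mathbf{u}'_j)_0,\alpha_j,\rho_0,\mathbf{T}_0)$ is feasible for~\eqref{eq:Lip-NCNN}, with value $v_0$. Here $(\mathbf{T}_0,\rho_0)$ is obtained by solving~\eqref{eq:fazlyab_LMI} with the initial weights.
\item \textbf{The $\mathbf{T}$-step.} Given the weights $(\mathbf{u}'_j)_{t-1}$ and $\alpha_j$, the loss term is fixed, and~\eqref{eq:fazlyab_LMI} minimizes $\rho$ over $(\rho,\boldsymbol\lambda)$. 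The previous pair $(\rho_{t-1},\mathbf{T}_{t-1})$ is feasible for this program: it satisfied the LMI together with these same weights, as the output of the previous convex step. Therefore $\rho$ after Line~\ref{line:solve-for-T} is at most $\rho_{t-1}$, and $J$ does not increase.
\item \textbf{The $\mathbf{u}$-step.} With $\mathbf{T}_t$ fixed, I invoke Corollary~\ref{restriction corollary}. The current point, consisting of the weights $(\mathbf{u}'_j)_{t-1}$, the outer weights $\alpha_j$ and $\rho_t$, is feasible for the Lipschitz-regularized~\eqref{eq:Lip-NCNN} with $\mathbf{T}=\mathbf{T}_t$. Its patterns are the extracted multisets $\mathcal{K}^\pm$, and its outer weights have absolute values $\mathbf{c}$ and the fixed signs. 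Through the mapping $h(j)$ it corresponds to a feasible point of~\eqref{eq:Lip-convex-FNN} with the same objective value. Hence the optimum of~\eqref{eq:Lip-convex-FNN} is no larger than this value. Mapping the optimum back via~\eqref{eq:mapping_1}, as in Theorem~\ref{convex reformulation theorem}, gives an SNN with the same outer weights, the same patterns and the same value of $J$. This yields the new weights $(\mathbf{u}'_j)_t$ with an accompanying $\rho'$.
\end{enumerate}

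One subtlety must be handled in the $\mathbf{u}$-step. Corollary~\ref{restriction corollary} uses the patterns $\mathcal{K}^\pm$ extracted once at Line~\ref{line:extract-patterns}, not the patterns of the current iterate. So I must check that the iterate stays inside the fixed cones. This holds because the constraints~\eqref{eq:Lip-convex-FNN_proj} force every iterate $(\mathbf{u}'_j)_t$ to have pattern $\mathbf{s}^{k_j^\pm}$, so the same $h(j)$ (the identity on indices) works at every iteration. Note also that $v_t$ is computed with $\rho_t$ from the subsequent re-solve rather than with $\rho'$. This causes no problem, since the next $\mathbf{T}$-step only lowers $\rho$ further.

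The main obstacle is bookkeeping the chain of inequalities. I need $v_t$, computed with $\rho_t$ from Line~\ref{line:solve-for-T}, to be at most the optimal value of~\eqref{eq:Lip-convex-FNN} at iteration $t$. That value is at most the value of the iterate at the start of iteration $t$, which is in turn at most $v_{t-1}$. To close the chain, I will define the value of~\eqref{eq:Lip-NCNN} at the end of iteration $t$ using $\rho'^\star_t$, and use the $\mathbf{T}$-step bound $\rho_{t+1}\le\rho'^\star_t$. Finally, the conclusion that the limit is "a minimum" should be read as convergence of the objective values to a limit no worse than $v_0$, not as convergence to a global or stationary minimizer. I will state it in that form.
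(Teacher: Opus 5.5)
Your proposal is correct and follows essentially the same route as the paper's proof. The $\rho$ returned at Line~\ref{line:solve-for-T} can only decrease because the previous $(\rho,\mathbf{T})$ pair remains feasible for \eqref{eq:fazlyab_LMI}. The convex program at Line~\ref{algline:solve_lipschitz_train} is solved to optimality over a feasible set that contains the previous weights; your check that the iterates stay in the fixed cones is a detail the paper leaves implicit. Monotone convergence then concludes the argument, with your bound $J\ge 0$ doing the job of the paper's $p^\star_{\eqref{eq:Lip-NCNN}}$.

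Tracking $\tfrac12 e_t+\beta_2\rho'^\star_t$ is also what the paper's proof does, after silently re-indexing $\rho$. One correction: Line~9's $\rho_t$ comes from the \emph{preceding} Line~\ref{line:solve-for-T} solve on $(\mathbf{u}'_j)_{t-1}$, not from a subsequent one. So your remark that this ``causes no problem'' is wrong. The literal $v_t$ of Algorithm~\ref{alg:training_lipschitz} need not be monotone, and your redefinition is genuinely required rather than cosmetic.
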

\begin{pf}
Let $t\in\mathbb{N}$ and consider the weights $\{(\mathbf{u}_j')_{t-1}\}_{j=1}^m,\{\alpha_j\}_{j=1}^m$ and matrix $\mathbf{T}_{t-1}$ from the previous iteration. Iteration $t$'s SNN has a squared error term $e_{t-1}=\texttt{loss}\big(\{(\mathbf{u}'_j)_{t-1},\alpha_j\}_{j=1}^m\big)$ and Lipschitz upper bound $\rho_{t-1}$ yielding the training objective value of $v_{t-1}=\tfrac{1}{2}e_{t-1}+\beta_2 \rho_{t-1}$. Following Line 7, we obtain $\mathbf{T}_t$ and $\rho'_{t-1}$, the latter of which is $\rho'_{t-1}\leq\rho_{t-1}$ and, therefore, $v_{t-1}'=\tfrac{1}{2}e_{t-1}+\beta_2\rho_{t-1}'\leq v_{t-1}$. On Line 8, solving~\eqref{eq:Lip-convex-FNN} with a feasible set containing $\{(\mathbf{u}_j)_{t-1}'\}_{j=1}^m$ and the resulting $\rho_{t-1}'\mathbf{T}_t$, we get $v_t\leq v'_{t-1}\leq v_{t-1}$ because \eqref{eq:Lip-convex-FNN} is solved to optimality. Hence, the sequence $(v_t)$ is monotonically decreasing. 

Next we established the convergence of $(v_t)$. Because~\eqref{eq:Lip-convex-FNN} is a restriction of~the Lipschitz-regularized~\eqref{eq:Lip-NCNN}, we have $v_t \geq p^\star_{\eqref{eq:Lip-NCNN}}$ for all $t$, where is $p^\star_{\eqref{eq:Lip-NCNN}}$ the global minimum of the Lipschitz-regularized~\eqref{eq:Lip-NCNN} achieved if, e.g., Theorem~\ref{convex reformulation theorem}'s assumption are met, and the sequence is bounded below. By the monotone convergence theorem, $(v_t) \to \inf_t v_t \equiv \underline{v}$ and the sequence convergences. Finally, monotonicity also implies that $v_0 \geq \underline{v}$ and the resulting objective value is less than or equal to the initial one.
\qed
\end{pf}
}
In the next section, we illustrate using numerical experiments how Algorithm \ref{alg:training_lipschitz} can be used to improve existing training methods on multiple baseline models from which we extract activation patterns.
\section{Experimental results} \label{sec:experiments}
We compare baseline models with our post-processing method on several UCI regression tasks \cite{uci_ml_repository}. We also synthetically generate regression tasks using the $\texttt{rastrigin}$ \cite{rastrigin1974systems} function. We consider SNNs with 100 hidden units for all of our experiments. In the first experiment, we train models in 10 independent trials using $\ell_2$-regularization via SGD and PGD methods and Lipschitz-regularization via \cite{Lipschitz_Pauli}'s ADMM. We extract the pattern multiset of each trained model and employ Algorithm \ref{alg:training_lipschitz} to obtain convex solutions for
Lipschitz-regularized problem. The convex solutions are referred to as 
$(\cdot)$-C-lip
 where the prefix $(\cdot)$ denotes the method used to obtain the initial solution and its corresponding multiset, viz., SGD, PGD and ADMM.
 For example, (SGD)-C-Lip refers to a convex solution to the Lipschitz-regularized problem using the pattern multisets of a base model trained with SGD. As an additional baseline, we include $(\cdot)$-ADMM which is obtained by taking the base model and running the ADMM algorithm to solve the Lipschitz-regularized~\eqref{eq:Lip-NCNN} initialized with the base model as the initial point. Post training, we evaluate the adversarial MSE on the test set using PGD attacks \cite{PGD}, which would be representative of model performance when deployed in an adversarial setting. 
The adversarial MSE is obtained using attacks described by $\epsilon=0\ (\text{clean}),\ 0.1$ and $\ 0.2$. Because there is a trade-off between minimizing the clean MSE ($\epsilon=0$) and the adversarial MSE ($\epsilon>0$), we are interested in models that { perform most consistently across this trade-off, in the sense that they achieve the lowest median MSE across all $\epsilon$ values}.

In Figure \ref{fig:rel-diff-test-mse}, we observe that $(\cdot)$-C-lip is the only model that achieves the lowest median MSE over some datasets and base models, namely, $\texttt{solar}$ with base models PGD and ADMM, $\texttt{machine}$ with all base models, $\texttt{servo}$ with SGD, and $\texttt{wine}$ with SGD and ADMM.
This illustrates that our convex restriction generally finds better solutions than simply fine-tuning a pre-trained model using Lipschitz-regularization with ADMM. 

In our second experiment, we compare objective values of the baseline models for the $\ell_2$- and Lipschitz-regularized~\eqref{eq:Lip-NCNN} problems against our convex solutions. This experiment serves to confirm our claim that our convex restriction obtains a lower objective and to observe the reduction. Using $\beta_1=\beta_2=0.001$, we solve the $\ell_2$- and Lipschitz-regularized problems~\eqref{eq:Lip-NCNN} for two baselines: SGD for the $\ell_2$ case and ADMM for the Lipschitz case and repeat the process over 10 independent trials. Post training, we employ Algorithm~\ref{alg:training_lipschitz} on the baseline models. We try two different methods of extracting pattern multisets, the first being extracting all 100 activation patterns and grouping them into $\mathcal{K}^+$ and $\mathcal{K}^-$ according to the sign of the outer layer weights. { This method is denoted by C-$\ell_2$ and C-lip. The second method, which reflects $\texttt{Alg1}$ described in~\cite{SCNN_Pilanci}, randomly samples 50 among 100 activation patterns of the pre-trained SNN and sets $\mathcal{K^+}=\mathcal{K}^-$.   We denote this method by samp-$\ell_2$ and samp-lip. By extracting only a fraction of the pre-trained SNN, Corollary~\ref{restriction corollary} does not apply and, therefore, Algorithm \ref{alg:training_lipschitz} is not guaranteed to output an improved solution. 
}
We observe in Table~\ref{tab:objectives} that C-$\ell_2$ and C-lip have positive percentages across all data sets on both $\ell_2$- and Lipschitz-regularized~\eqref{eq:Lip-NCNN}, which is consistent with Corollary~\ref{restriction corollary}. In Table~\ref{tab:objectives} (left), we observe that C-$\ell_2$ has significantly lower objective values ($>5\%$) relative to the SGD baseline on \texttt{servo} and \texttt{machine} and is much higher than samp-$\ell_2$. We further note that on $\texttt{rastrigin}$ and $\texttt{wine}$, samp-$\ell_2$ has significantly worse objective value than the SGD baseline. In Table \ref{tab:objectives} (right), we observe that (C)-lip has the lowest objective across all datasets. We observe significant improvement on $\texttt{servo}$ and $\texttt{machine}$. We also observe that samp-lip has worse objective values than ADMM on all datasets. In both $\ell_2$- and Lipschitz-regularized problems, Algorithm~\ref{alg:training_lipschitz} can output a much worse solution than the initial one when extracting only 50 of the activation patterns (i.e., samp-$\ell_2$/lip) compared to extracting all 100 (i.e., C-$\ell_2$/lip). We observe that the improvement from using Algorithm~\ref{alg:training_lipschitz} on the objective value is less apparent in the Lipschitz-regularized~\eqref{eq:Lip-NCNN} case when compared against ADMM than in the $\ell_2$-regularized~\eqref{eq:Lip-NCNN} case when compared against SGD.

\begin{table*}[t]
\centering
\caption{Relative differences of objective values with respect to the baseline method across several datasets, each evaluated over 10 random trials. The relative difference is computed as $\frac{baseline-(\cdot)}{baseline}$ where $baseline$ is the objective obtained using SGD and ADMM for the $\ell_2$- and Lipschitz-regularized problems, respectively. \textbf{Left:} $\ell_2$-regularized objectives relative to the SGD baseline. \textbf{Right:} Lipschitz-regularized objectives relative to the ADMM baseline. Larger positive percentages are better (highlighted in red), while negative percentages are worse (highlighted in blue). The shading intensity reflects the magnitude of the percentage differences, with thresholds at $1\%,\ 5\%$, and $10\%$. On each dataset and column (Min, Max, Avg), the best percentage is in bold.}
\label{tab:objectives}
\begin{minipage}[t]{0.48\textwidth}
\centering
\small
\begin{tabular}{llrrr}
\toprule
Dataset & Method & Min & Max & Avg \\
\midrule
\texttt{servo}    
                   & C-$\ell_2$     & \shadered \textbf{+20.76\%}      & \shadered \textbf{+25.42\%}      & \shadered \textbf{+23.71\%}      \\
                   & samp-$\ell_2$  & \shadered +16.10\%               & \shadered +14.42\%               & \shadered +14.00\%               \\
\addlinespace
\texttt{machine}   
                   & C-$\ell_2$     & \shademedred \textbf{+9.07\%}    & \shadered \textbf{+11.08\%}      & \shademedred \textbf{+9.74\%}    \\
                   & samp-$\ell_2$  & \shadelightred +4.89\%           & \shademedred +5.24\%             & \shademedred +5.63\%             \\
\addlinespace
\texttt{solar}   
                   & C-$\ell_2$     & \shadelightred \textbf{+2.84\%}  & \shadelightred \textbf{+4.27\%}  & \shadelightred \textbf{+3.76\%}  \\
                   & samp-$\ell_2$  & \shadelightblue-3.84\%                & \shadelightblue-1.94\%                & \shadelightblue-3.21\%                \\
\addlinespace
\texttt{rastrigin}
                   & C-$\ell_2$     & \shadelightred \textbf{+3.70\%}  & \shadelightred \textbf{+3.93\%}  & \shadelightred \textbf{+3.88\%}  \\
                   & samp-$\ell_2$  & \shademedblue-9.04\%                & \shadeblue-13.48\%               & \shadeblue-13.06\%               \\
\addlinespace
\texttt{wine}     
                   & C-$\ell_2$     & \shadelightred \textbf{+1.19\%}  & \shadelightred \textbf{+2.13\%}  & \shadelightred \textbf{+1.56\%}  \\
                   & samp-$\ell_2$  & \shadeblue-74.60\%               & \shadeblue-82.83\%               & \shadeblue-75.55\%               \\\bottomrule
\end{tabular}
\end{minipage}
\hfill
\begin{minipage}[t]{0.48\textwidth}
\centering
\small
\begin{tabular}{llrrr}
\toprule
Dataset & Method & Min & Max & Avg \\
\midrule

\texttt{servo}     & C-lip    & \shadelightred\textbf{+3.84\%} & \shademedred\textbf{+6.59\%}   & \shademedred\textbf{+5.92\%}   \\
                   & samp-lip & \shadeblue-10.31\%             & \shadeblue-55.02\%             & \shadeblue-27.18\%             \\
\addlinespace
\texttt{machine}   & C-lip    & \shademedred\textbf{+5.85\%}   & \shademedred\textbf{+8.28\%}   & \shademedred\textbf{+7.46\%}   \\
                   & samp-lip & \shadelightblue-2.57\%         & \shadeblue-21.03\%             & \shadeblue-10.32\%             \\
\addlinespace
\texttt{solar}     & C-lip    & \shadelightred\textbf{+2.67\%} & \shademedred\textbf{+5.28\%}   & \shadelightred\textbf{+3.74\%} \\
                   & samp-lip & \shadelightblue-3.60\%         & \shademedblue-5.57\%           & \shadelightblue-3.95\%         \\
\addlinespace
\texttt{rastrigin} & C-lip    & \shadelightred\textbf{+4.40\%} & \shadelightred\textbf{+3.17\%} & \shadelightred\textbf{+3.89\%} \\
                   & samp-lip & \shadeblue-93.43\%             & \shadeblue-125.24\%            & \shadeblue-109.45\%            \\
\addlinespace
\texttt{wine}      & C-lip    & \shadelightred\textbf{+3.50\%} & \shademedred\textbf{+7.20\%}   & \shademedred\textbf{+5.55\%}   \\
                   & samp-lip & \shadeblue-91.15\%             & \shadeblue-124.66\%            & \shadeblue-101.83\%            \\
\addlinespace
\bottomrule
\end{tabular}\end{minipage}
\end{table*}

\begin{figure*}[t]
  \centering
  \begin{subfigure}[b]{0.196\textwidth}
    \centering
    \includegraphics[width=\linewidth]{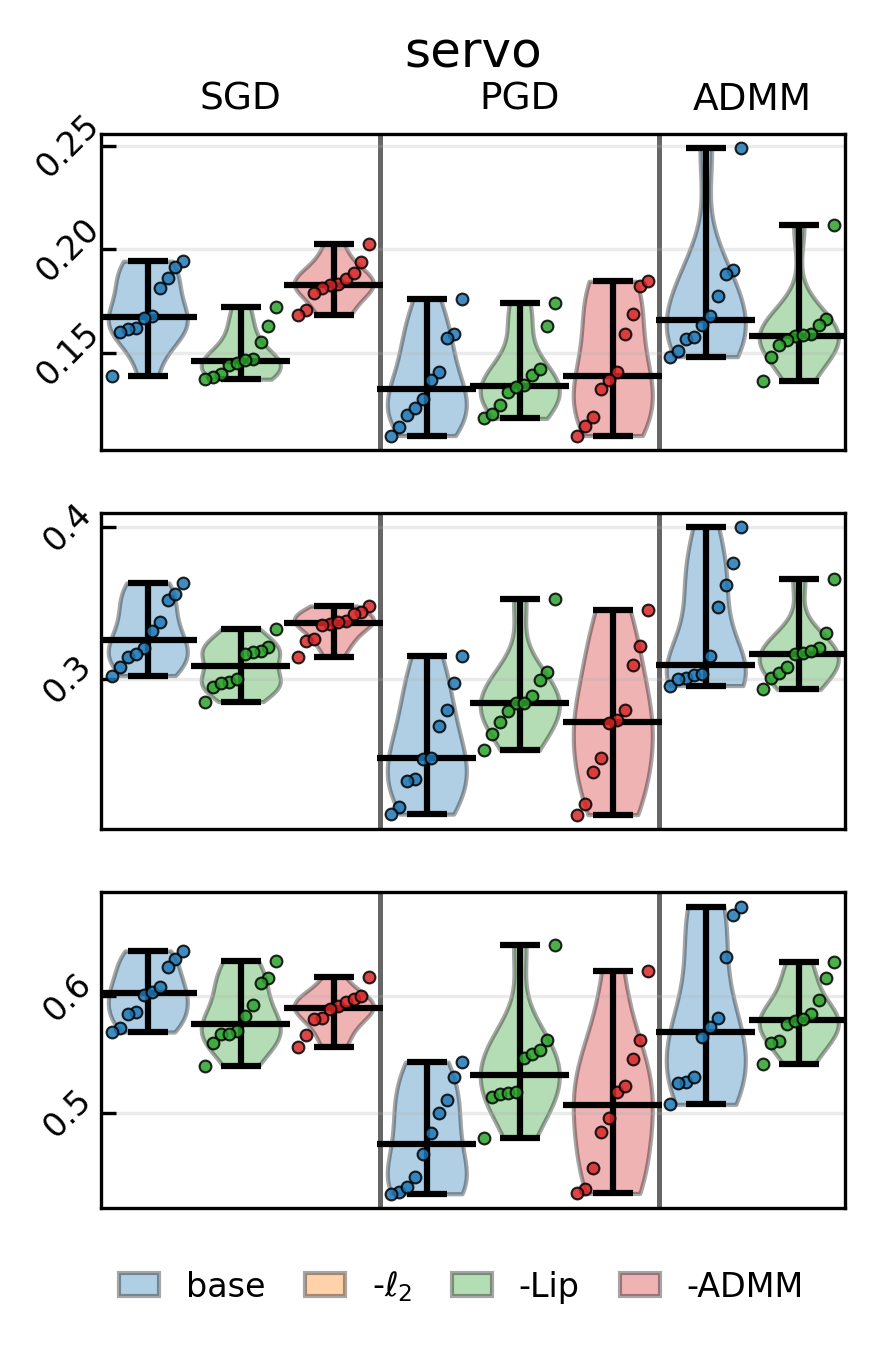}
    \caption{\texttt{servo}}
  \end{subfigure}
  \vspace{0.6em}
  \begin{subfigure}[b]{0.196\textwidth}
    \centering
    \includegraphics[width=\linewidth]{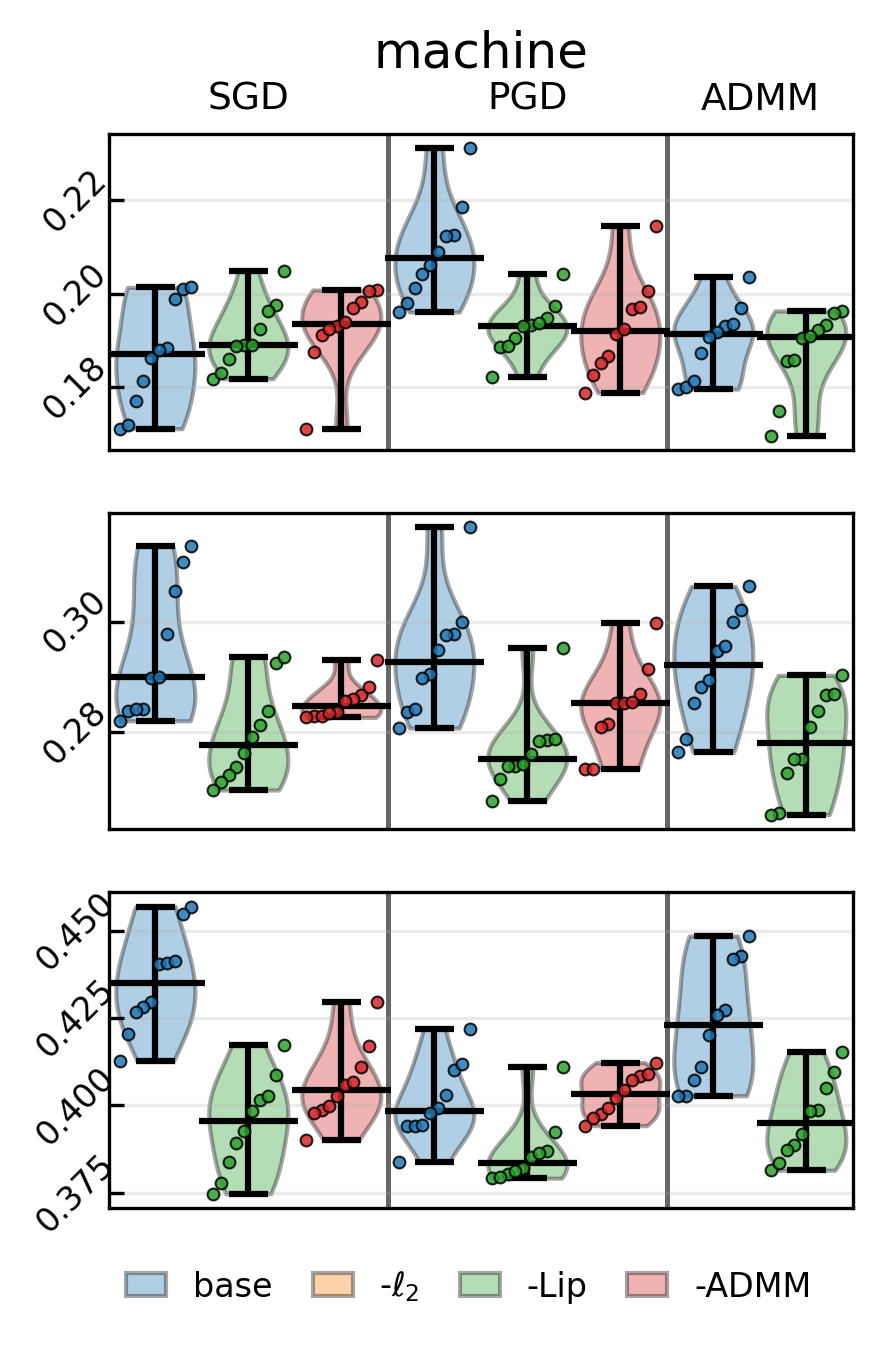}
    \caption{\texttt{machine}}
  \end{subfigure}\hfill
  \begin{subfigure}[b]{0.196\textwidth}
    \centering
    \includegraphics[width=\linewidth]{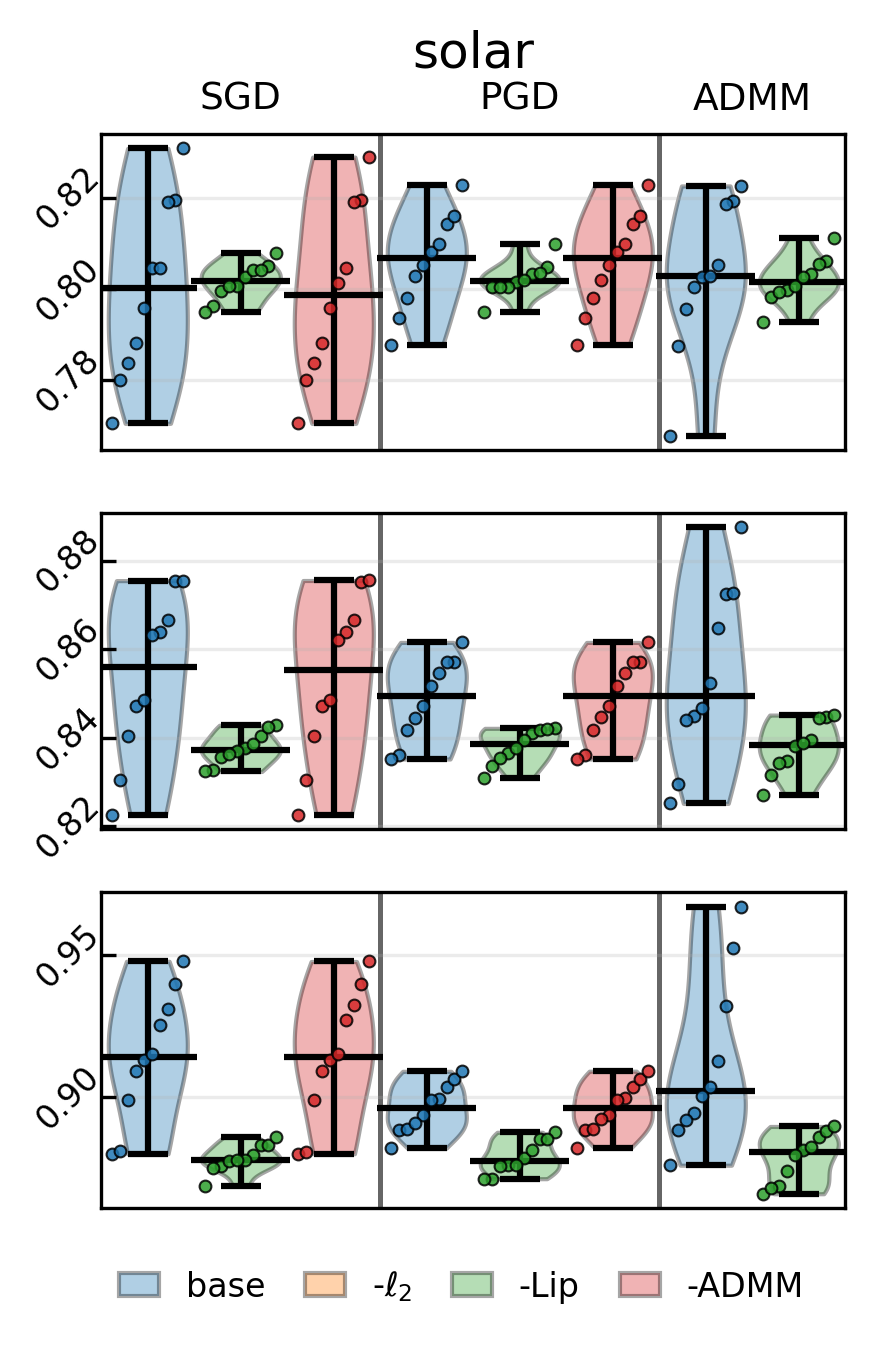}
    \caption{\texttt{solar}}
  \end{subfigure}\hfill
  \begin{subfigure}[b]{0.196\textwidth}
    \centering
    \includegraphics[width=\linewidth]{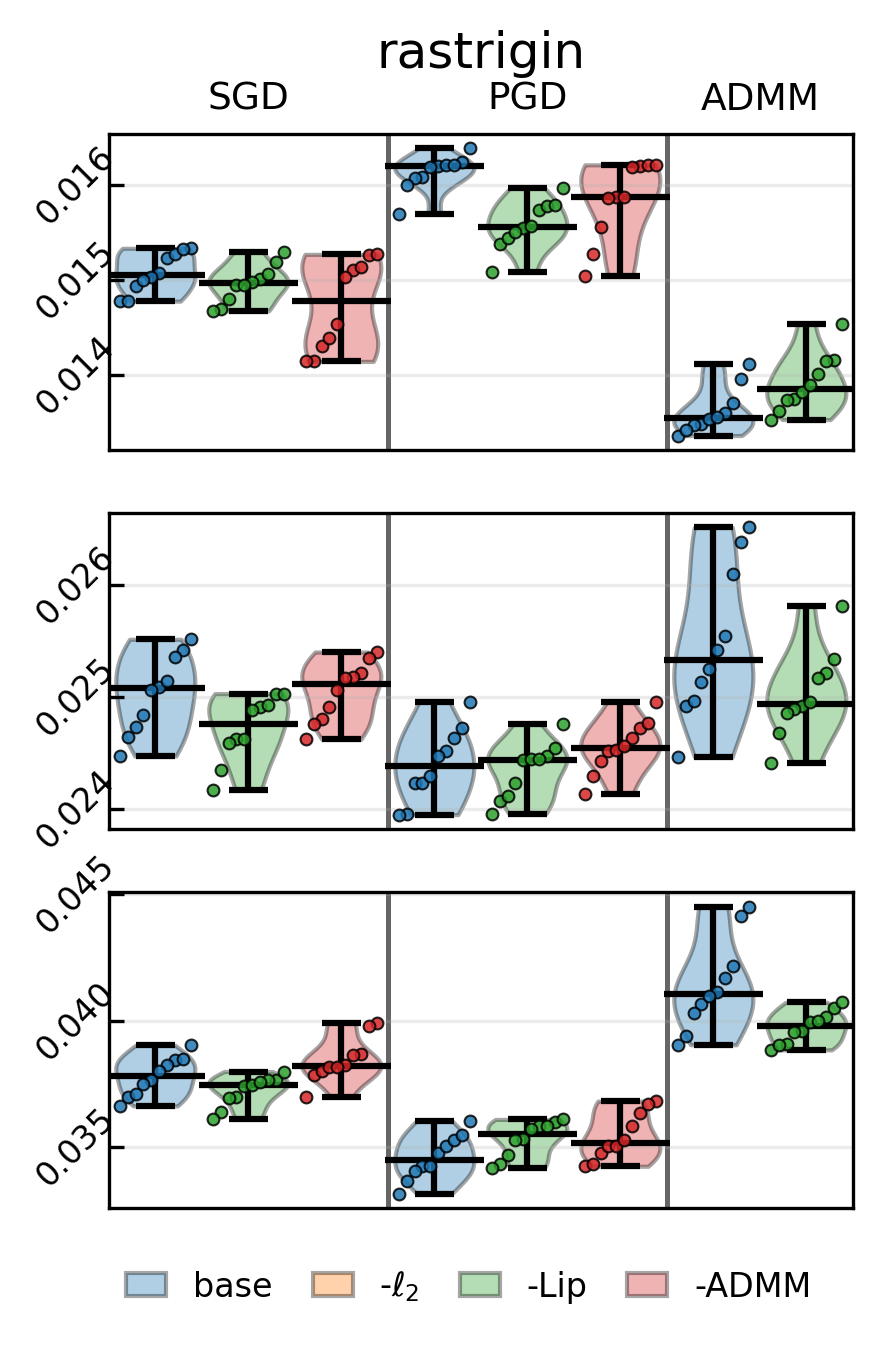}
    \caption{\texttt{rastrigin}}
  \end{subfigure}\hfill
  \begin{subfigure}[b]{0.208\textwidth}
    \centering
    \includegraphics[width=\textwidth]{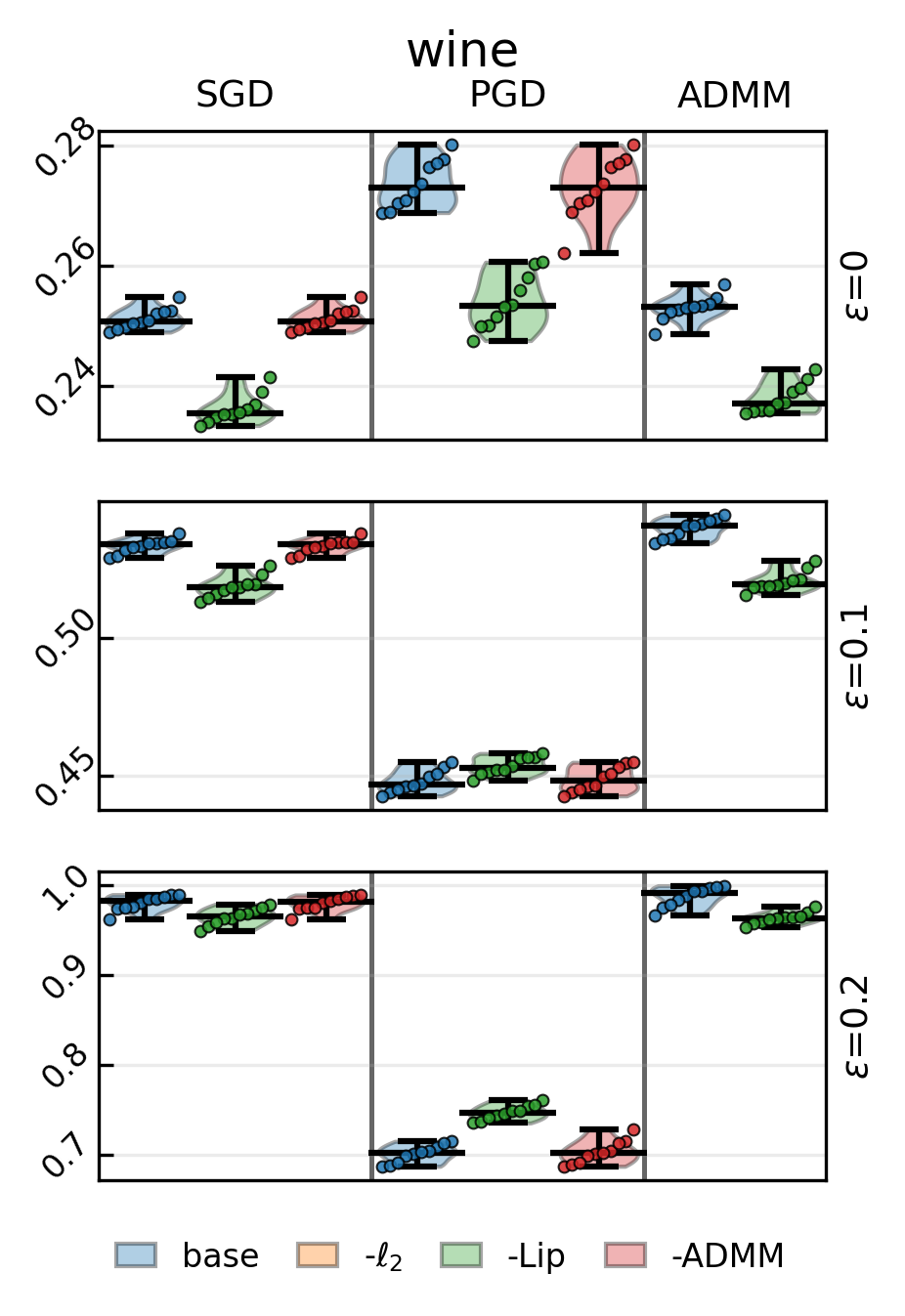}
    \caption{\texttt{wine}}
  \end{subfigure}\hfill

  \caption{Adversarial MSE of base models and convex models on various datasets using PGD attacks of magnitudes $\epsilon\in\{0,\ 0.1,\ 0.2\}$. The columns (SGD, PGD, ADMM) represent the method used to train the base model. Each violin plot shows the distribution over 10 trials, with bars indicating the minimum, median and the maximum.}
  \label{fig:rel-diff-test-mse}
\end{figure*}

\section{Conclusion}
We propose a post-processing method for the Lipschitz-regularized SNN training problem which obtains a solution that is no worse given any feasible solution. We propose a restriction to the Lipschitz-regularized training problem and demonstrated that it admits an equivalent convex program. We design an algorithm which takes any feasible point for the Lipschitz regularization training to obtain a solution that is no worse by solving a convex restriction. We demonstrated experimentally that our post-processing method yields solutions that are both more accurate and robust to adversarial attacks of varying attack sizes. 
In future work, we aim to extend our approach to deep NNs as 
it had been shown that a convex reformulation akin to \eqref{eq:Lip-convex-NN-pilanci} can be obtained for training deep NNs \cite{ergen_DNN}. Additionally, instead of stopping at the solution of the convex restriction, methods to sequentially improve the pattern multiset selection, e.g., continuing to use SGD or ADMM, could be proposed to further move towards the global optimum of { Theorem \ref{convex reformulation theorem}}.

\bibliographystyle{abbrv}       
\bibliography{autosam}         

\appendix
\end{document}